\documentclass[10pt]{article}
\usepackage[compress, round]{natbib}
\usepackage{ss}
%% File: header.tex
%% Basic packages

\usepackage[utf8]{inputenc}
\usepackage[T1]{fontenc}

% set page geometry
\usepackage[verbose=true,
letterpaper,
textheight=8.7in,
textwidth=6.4in,
%margin=1in,
headheight=12pt,
headsep=25pt,
footskip=30pt]{geometry}

%% LOAD A BUNCH OF OTHER USEFUL PACKAGES
% \usepackage{microtype}
\usepackage{mathtools}
\usepackage{booktabs}
\usepackage{graphicx}
\usepackage{amsmath,amssymb,amsfonts,amsthm,amsxtra,bm}
\usepackage{amsthm}
\usepackage{xspace}
\usepackage{enumerate}
\usepackage{hyperref}
\usepackage{url}
\usepackage{colortbl}
\usepackage{makecell,multirow}       % professional-quality tables
\usepackage{nicefrac}       % compact symbols for 1/2, etc.
\usepackage{thmtools}  
\usepackage{xspace}
\usepackage{footnote, tablefootnote}
\usepackage{float}
\floatstyle{plaintop}
\restylefloat{table}
\usepackage{epstopdf}
\usepackage{latexsym}
\usepackage{algorithm, algorithmicx, algpseudocode}
\usepackage{todonotes}
\usepackage{bbm}

\numberwithin{equation}{section}
\setcounter{section}{0} 

\definecolor{darkblue}{rgb}{0.0,0.0,0.65}
\definecolor{darkred}{rgb}{0.68,0.05,0.0}
\definecolor{darkgreen}{rgb}{0.0,0.29,0.29}
\definecolor{darkpurple}{rgb}{0.47,0.09,0.29}

\hypersetup{
   colorlinks = true,
   citecolor  = darkblue,
   linkcolor  = darkred,
   filecolor  = darkblue,
   urlcolor   = darkblue,
 }

%% File: macros.tex
%% Macros for the arxiv submission

\newtheorem{theorem}{Theorem}
\newtheorem{lemma}{Lemma}

\newtheorem{corollary}[theorem]{Corollary}
\newtheorem{definition}{Definition}
\newtheorem{rmk}{Remark}

\newenvironment{customcor}[1]
  {\innercustomcor}
  {\endinnercustomcor}

\newenvironment{customlem}[1]
  {\innercustomlem}
  {\endinnercustomlem}

\newcommand{\nashvsp}{\textsc{V-sp}}
\newcommand{\nashvol}{\textsc{V-ol}}
\newcommand{\nashqsp}{\textsc{Q-sp}}
\newcommand{\nashqol}{\textsc{Q-ol}}

  % sample variance

% The command can be slightly modified if you need that your defined operator uses subscripts below, as the \lim operator, in such case use \DeclareMathOperator*.

\newcommand{\nlsum}{\sum\nolimits}
\newcommand{\nlprod}{\prod\nolimits}

\newcommand{\bigo}{\mathcal{O}}
\newcommand{\bigotilde}{\tilde{\mathcal{O}}}

\newcommand{\mg}{\mathrm{MG}}
\newcommand{\Regret}{\textnormal{Regret}}
\newcommand{\Trans}{\mathbb{P}}
  % estimated transition
  % estimated reward

\newcommand{\uV}{V}
\newcommand{\uQ}{Q}
\newcommand{\oV}[1]{{V_{#1}^{\ast}}}
\newcommand{\oQ}[1]{Q_{#1}^{\ast}}
\newcommand{\uL}{L}
\newcommand{\ul}{l}
\newcommand{\ubeta}{\beta}
\newcommand{\ueta}{\eta}
\newcommand{\ugamma}{\overline{\gamma}}

\newcommand{\hgamma}{\hat{\gamma}}
\newcommand{\tbeta}{\tilde{\beta}}

\newcommand{\setto}{\leftarrow}

\newcommand{\D}{\mathbb{D}}
\newcommand{\E}{\mathbb{E}}
\newcommand{\I}{\mathbb{I}}
\newcommand{\N}{\mathbb{N}}
% \P is already defined

\DeclareMathAlphabet{\mathsfit}{T1}{\sfdefault}{\mddefault}{\sldefault}
\SetMathAlphabet{\mathsfit}{bold}{T1}{\sfdefault}{\bfdefault}{\sldefault}

\newcommand{\brac}[1]{{\left[ #1 \right]}}

\newcommand{\cO}{\mathcal{O}}

\newcommand{\Ber}{{\sf Ber}}

\newcommand{\set}[1]{{\left\{ #1 \right\}}}

\newcommand{\eps}{\varepsilon}

\newcommand{\cA}{\mathcal{A}}
\newcommand{\cB}{\mathcal{B}}

\newcommand{\cM}{\mathcal{M}}

\newcommand{\cS}{\mathcal{S}}

\newcommand{\cX}{\mathcal{X}}

\newcommand{\defeq}{\mathrel{\mathop:}=}

\makeatletter
\newcommand{\vast}{\bBigg@{3}}
\newcommand{\Vast}{\bBigg@{3.5}}
\makeatother

\graphicspath{{./figs/}}

\setlength{\parindent}{0pt}
\setlength{\parskip}{5.5pt}

\begin{document}

\title{Online Learning in Unknown Markov Games}

\author{%
   \name Yi Tian\footnotemark[1] \email{yitian@mit.edu}\\
   \addr{Massachusetts Institute of Technology}\\
   \name Yuanhao Wang\footnotemark[1] \email{yuanhao@princeton.edu}\\
   \addr{Princeton University}\\
   \name Tiancheng Yu\footnotemark[1] \email{yutc@mit.edu}\\
   \addr{Massachusetts Institute of Technology}\\
   \name Suvrit Sra \email{suvrit@mit.edu}\\
\addr{Massachusetts Institute of Technology}\\
}
\renewcommand*{\thefootnote}{\fnsymbol{footnote}}
\footnotetext[1]{Equal contribution.}
\renewcommand*{\thefootnote}{\arabic{footnote}}

\maketitle

\begin{abstract}
    We study online learning in unknown Markov games, a problem that arises in episodic multi-agent reinforcement learning where the actions of the opponents are unobservable. We show that in this challenging setting, achieving sublinear regret against the best response in hindsight is statistically hard. We then consider a weaker notion of regret by competing with the \emph{minimax value} of the game, and present an algorithm that achieves a sublinear $\tilde{\mathcal{O}}(K^{\nicefrac{2}{3}})$ regret after $K$ episodes. This is the first sublinear regret bound (to our knowledge) for online learning in unknown Markov games. Importantly, our regret bound is independent of the size of the opponents' action spaces. As a result, even when the opponents' actions are fully observable, our regret bound improves upon existing analysis (e.g., \citep{xie2020learning}) by an exponential factor in the number of opponents.
\end{abstract}

\section{Introduction}

Multi-agent reinforcement learning (MARL) helps us model strategic decision making problems in an interactive environment with multiple players. It has witnessed notable recent success (with two or more agents), e.g., in Go~\citep{silver2016mastering, silver2017mastering}, video games \citep{vinyals2019grandmaster}, Poker \citep{brown2018superhuman, brown2019superhuman}, and autonomous driving \citep{shalev2016safe}.
 
When studying MARL, often Markov games (MGs) \citep{shapley1953stochastic} are used as the computational model. Compared with Markov decision processes (MDPs) \citep{puterman2014markov}, Markov games allow the players to influence the state transition and returns, and are thus capable of modeling competitive and collaborative behaviors that arise in MARL. 
 
A fundamental problem in MGs is sample efficiency. Unlike MDPs, there are at least two key ways to measure performance in MGs: (1) the offline (self-play) setting, where we control both/all players and aim to minimize the number of episodes required to find a good policy; and (2) the online setting, where we can only control one player (which we refer to as \emph{our player}), treat other players as opponents, and judge how our player performs in the whole process using regret. The offline setting is more useful when training players in a controllable environment (e.g., a simulator) and the online setting is more favorable for life-long learning.
 
When ensuring sample efficiency for MARL, key challenges arise from the observation model. We distinguish between two online settings. When learning in \emph{informed} MGs, our player can observe the actions taken by the opponents. For learning in \emph{unknown} MGs~\citep{cesa2006prediction}, such observations are unavailable; information flows to our player only through the revealed returns and state transitions. We emphasize that both \emph{informed games} and \emph{unknown games} are describing the observation process instead of our prior knowledge of the parameters: We always assume zero knowledge of the transition function of the MG. 
 
Learning in unknown MGs is harder, more general, and potentially of greater practical relevance than informed MGs. It is thus important to discover algorithms that can guarantee low regret. However, theoretical understanding for unknown MGs is rather limited. Even the following \emph{fundamental question} for analyzing online learning in unknown MGs is open:
\begin{center}
    \textbf{Q1.}~~~\emph{Is sublinear regret achievable?}
\end{center}

To see why learning in unknown MGs is challenging, notice that without observing an opponents' actions, we cannot learn the transition function of the MG, even with infinitely many episodes to collect data. Therefore, explore-then-commit type of algorithms cannot achieve sublinear regret.

Another concern arises when the number of players involved increases, as then the effective size of the opponents' action space grows exponentially in it. Therefore, the following question is also crucial, even in (easier) informed MGs:
\begin{center}
  \textbf{Q2.}~~~\emph{Can the regret be independent of the size of the opponents' action space?}
\end{center}

\textbf{Contributions.} We answer both questions Q1 and Q2 affirmatively in this paper. At the heart of our answers lies an Optimistic Nash V-learning algorithm for online learning (\nashvol{}) that we develop. This algorithm is significant in the following aspects:
\begin{list}{$\bullet$}{\leftmargin=1.5em}
    \setlength{\itemsep}{-1pt}
    \item It achieves $\bigotilde(K^{\nicefrac{2}{3}})$ regret, the first sublinear regret bound for online learning in unknown MGs. This bound is nontrivial because without observing opponents' actions, we cannot learn the transition function of the MG, even with infinitely many episodes to collect data.
    \item Its regret does not depend on the size of the opponents' action space. This regret bound is also the first of this kind in the online setting, even for the (easier) \emph{informed} MG setting. For $m$-player MGs, the effective size of the opponents' action space is $A^{m-1}$ with $A$ the size of each player's action space. Therefore, compared with existing algorithms \citep{xie2020learning} even in the informed setting, we save an exponential factor.
    \item It is computationally efficient. The computational complexity does not scale up as the number of players $m$ increases; existing algorithms such as \citep{xie2020learning} suffer space and time complexities exponential in $m$. Also, in existing algorithms, a subprocedure to find a Nash equilibrium in two-player zero-sum games is called in each step, which becomes the computational bottleneck. In sharp contrast, our algorithm does not require calling any such subprocedures.  
\end{list}

The idea of Nash V-learning first appears in \citep{bai2020near}. We denote their original Nash V-learning algorithm by \nashvsp{} (SP is an acronym for self-play) to distinguish it from our algorithm \nashvol{}. See the discussion at the end of Section~\ref{sec:vlearning} for a detailed comparison of the two algorithms.

Furthermore, although the weaker notion of regret (see Section \ref{sec:prelim}) that we use has appeared in prior works \citep{brafman2002r,xie2020learning}, it is not clear why this choice is statistically reasonable. We justify this notion of regret by showing that competing with the best response in hindsight is statistically hard (Section~\ref{sec:lower}). Specifically, the regret can be exponential in the horizon $H$. This result also strengthens the computational lower bound in \citep{bai2020near} for online learning in unknown MGs. As an intermediate step, we prove that competing with the optimal policy in hindsight is also statistically hard in MDPs with adversarial transitions under bandit feedback, which strengthens the computational lower bound in \citep{yadkori2013online} under bandit feedback and is a result of independent interest.
 
\subsection{Related work}

\textbf{Learning in MGs without strategic exploration.} 
A large body of literature focuses on solving known MGs \citep{littman1994markov, hansen2013strategy} or learning with a generative model \citep{jia2019feature, sidford2020solving,zhang2020model}, using which we can sample transitions and returns for arbitrary state-action pairs. \citet{littman2001friend, hu2003nash, wei2017online} do not assume a generative model, but their results only apply to communicating MGs.

\textbf{Online MGs.} 
\citet{brafman2002r} propose R-max, which does not provide a regret guarantee in general. \citet{xie2020learning} study this setting for two-player zero-sum games with linear function approximation using the same weaker definition of regret. They use a value iteration (VI) based algorithm and achieve $\bigotilde(\sqrt{H^4A^3B^3S^3K})$ regret when translated into the tabular language, where $A$ and $B$ are number of actions for the two players, $S$ is the number of states and $H$ is the horizon. In Appendix~\ref{sec:nashq}, we adapt the Optimistic Nash Q-learning algorithm (\nashqsp{})~\citep{bai2020near} to the online setting (\nashqol{}, Algorithm~\ref{alg:nash-q}) and prove for \nashqol{} a $\bigotilde(\sqrt{H^5 A B S K})$ regret (Theorem~\ref{thm:nash-q}). 
All the three algorithms require observing the opponents' actions and thus cannot be applied to learning in unknown MGs. 

\textbf{Self-play.} 
There is a recent line of work focusing on achieving near-optimal sample complexity in offline two-player zero-sum MGs~\citep{bai2020provable, xie2020learning, bai2020near, liu2020sharp}. The goal is to find an $\epsilon$-approximate Nash equilibrium within $K$ episodes. VI-based methods~\citep{bai2020provable, xie2020learning} achieve $K=\bigotilde(S^2AB/\epsilon^2)$. \nashqsp{}~\citep{bai2020near} achieves $K=\bigotilde(SAB/\epsilon^2)$, and the \nashvsp{} algorithm~\citep{bai2020near} achieves the best existing result $K = \bigotilde(S(A+B)/\epsilon^2)$, matching the lower bound w.r.t.\ the dependence on $S$, $A$, $B$ and $\epsilon$. Note that in the self-play setting, we need to find good policies for both players, so the dependence on $B$ is inevitable. Extensions to multi-player general-sum games are discussed in~\citep{liu2020sharp} but the dependence on the number of players is exponential.

\textbf{MDPs with adversarial transitions.} 
Online MGs are closely related to adversarial MDPs. In general, competing with the optimal policy in hindsight in MDPs with adversarial transitions is intractable. With full-information feedback, the problem is computationally hard~\citep{yadkori2013online}. With bandit feedback, the problem is statistically hard (Lemma~\ref{lem:lb-adv-mdp}). However, under additional structural assumptions, one can achieve low regret~\citep{cheung2019non}. 

\textbf{MDPs with adversarial rewards.} 
We can ensure sublinear regret if the transition is fixed (but unknown) and only the reward is chosen adversarially \citep{zimin2013online,rosenberg2019online,jin2019learning}. This yields another useful model for adversarial MDPs. The best existing result in adversarial episodic MDPs with bandit feedback and unknown transition is achieved in \citep{jin2019learning} with $\bigotilde(\sqrt{H^3S^2AK})$ regret, where $H$ is the horizon.

\textbf{Single-agent RL.} 
Finally, there is an abundance of works on sample efficient learning in MDPs. \citet{jaksch2010near} first adopt optimism to achieve efficient exploration in MDPs and \citet{jin2018q} extend this idea to model-free methods. \citet{azar2017minimax} and \citet{zhang2020almost} achieve minimax regret bounds (up to log-factors) $\bigotilde(\sqrt{H^3SAK})$ for model-based and model-free methods, respectively.

\section{Background and problem setup}
\label{sec:prelim}

For simplicity, we formulate the problem of two-player zero-sum MGs in this section and provide our algorithmic solution in Section~\ref{sec:vlearning}. Please see Section~\ref{sec:multi} for extensions to multi-player general-sum MGs.

\subsection{Markov games: setup and notation}
\paragraph{Model.}  We consider episodic two-player zero-sum MGs, where the max-player (min-player) aims to maximize (minimize) its cumulative return. Let $[H] := \{1,2,\ldots,H\}$ for positive integer $H$, and let $\Delta(\cX)$ be the set of probability distribution on set $\cX$.
Then such an MG is denoted by $\mg(\cS, \cA, \cB, \Trans, r, H)$, where 
\begin{list}{–}{\leftmargin=1.5em}
\setlength{\itemsep}{0pt}
    \item $H \in \N_{+}$ is the number of steps in each episode,
    \item $\cS = \bigcup_{h\in [H+1]} \cS_h$ is the state space, 
    \item $\cA = \bigcup_{h\in [H]} \cA_h$ ($\cB = \bigcup_{h\in [H]} \cB_h$) is the action space of the max-player (min player, resp.).
    \item $\Trans$ is a collection of \emph{unknown} transition functions $\{\Trans_h: \cS_h \times \cA_h \times \cB_h \to \Delta(\cS_{h+1})\}_{h\in [H]}$, and
    \item $r$ is a collection of return functions $\{r_h: \cS_h \times \cA_h \times \cB_h \to [0, 1]\}_{h\in [H]}$.
\end{list}
The return $r$ is usually called reward in MDPs, which a player aims to maximize. We will use the term ``return'' for MGs and reserve the term ``reward'' for (adversarial) MDPs. 

With a subscript $h$ let $\cS_h, \cA_h, \cB_h, \Trans_h, r_h$ denote the corresponding objects at step $h$. Let $|\cdot|$ denote cardinality of a set; then define the following terms: 
\begin{align*}
    S := \sup_{h\in [H]} |\cS_h|, \quad 
    A := \sup_{h\in [H]} |\cA_h|, \quad 
    B := \sup_{h\in [H]} |\cB_h|.
\end{align*} 

\paragraph{Interaction protocol.}
In each episode, the MG starts at an adversarially chosen initial state $s_1 \in \cS_1$. At each step $h\in [H]$, the two players observe the state $s_h\in \cS_h$ and simultaneously take actions $a_h\in \cA_h$, $b_h\in \cB_h$; then the environment transitions to the next state $s_{h+1} \sim \Trans_{h}(\cdot\vert s_h, a_h, b_h)$ and outputs the return $r_{h}(s_h, a_h, b_h)$. 
The max-player's policy $\mu$ specifies a distribution on $\cA_h$ at each step $h$. Concretely, $\mu = \{\mu_h\}_{h\in [H]}$ where $\mu_h: \cS_h \to \Delta(\cA_h)$. Similarly we define the min-player's policy $\nu$ .

\paragraph{Value functions.}
Analogously to MDPs, for a policy pair $(\mu, \nu)$, step $h\in [H]$, state $s\in \cS_h$, and actions $a\in \cA_h, b\in \cB_h$, define the state value function and Q-value function as:
\begin{align*}
    V_{h}^{\mu, \nu}(s) &:= \E_{\mu, \nu}[\nlsum_{h'=h}^{H} r_{h'}(s_{h'}, a_{h'}, b_{h'}) \vert s_h = s], \\
    Q_{h}^{\mu, \nu}(s, a, b) &:= \E_{\mu, \nu}[\nlsum_{h'=h}^{H} r_{h'}(s_{h'}, a_{h'}, b_{h'}) \vert s_h = s, a_h = a, b_h = b].
\end{align*}
For compactness of notation, define the operators:
\begin{align*}
    \Trans_h V(s, a, b) := \E_{s'\sim \Trans_h(\cdot\vert s, a, b)}[V(s')], \quad 
    \D_{\mu, \nu}[Q](s) := \E_{a\sim \mu(\cdot\vert s), b\sim \nu(\cdot\vert s)}[Q(s, a, b)].
\end{align*}
Then we have the following Bellman equations:
\begin{align*}
    V_{h}^{\mu, \nu}(s) = \D_{\mu_h, \nu_h} [Q_{h}^{\mu, \nu}](s),\quad 
    Q_{h}^{\mu, \nu}(s, a, b) = (r_h + \Trans_{h}V_{h+1}^{\mu, \nu})(s, a, b).
\end{align*}
For convenience define $V_{H+1}^{\mu, \nu}(s) := 0$ for $s\in \cS_{H+1}$.

\begin{figure*}[t]
  \centering
  \includegraphics[width=0.6\textwidth]{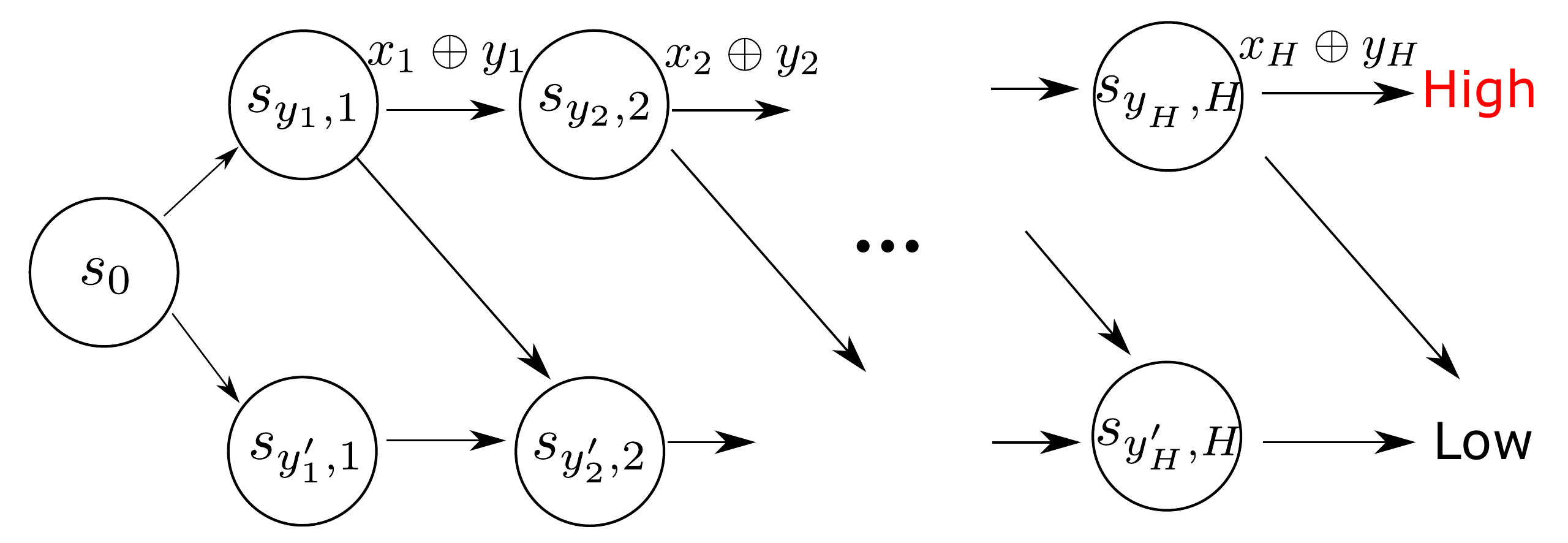}
  \vspace{-0.5cm}
  \caption{Illustration of the MDP $M_{X,Y}$. For $y\in\{0,1\}$, $y'$ stands for $1-y$.}
  \vspace{-0.1cm}
  \label{fig:mdp-short}
\end{figure*}

\paragraph{Optimality.}
For a given min-player's policy $\nu$, there exists a \emph{best response} $\mu^{\dagger} $ to it, such that $V_{h}^{\mu^{\dagger}, \nu}(s) = V_{h}^{\dagger, \nu}(s) := \sup_{\mu} V_{h}^{\mu, \nu}(s)$ for any step $h\in [H]$ and state $s\in \cS_h$. Again, a symmetric discussion applies to the best response to a max-player's policy. The following minimax theorem holds for two-player zero-sum MGs: for any step $h\in [H]$ and state $s\in \cS_h$, 
\begin{align*}
    \max_{\mu} \min_{\nu} V_{h}^{\mu, \nu}(s) = \min_{\nu} \max_{\mu} V_{h}^{\mu, \nu}(s).
\end{align*}
A policy pair $(\mu^{\ast}, \nu^{\ast}) $ that achieves the equality is known as a \emph{Nash equilibrium}. We use $V^{\ast}_h(s):=V_h^{\mu^{\ast},\nu^{\ast}}(s)$ to denote the value at the Nash equilibrium, which is unique for the MG and we call the \emph{minimax value} of the MG.

\subsection{Problem setup}
We are now ready to formally define the problem of online learning in an \emph{unknown} MG: we control the max-player and in each step, only the state $s_h$ and return $r_h$ are revealed, but not the action of the min-player $b_h$. Recall that if $b_h$ is also accessible, we call it the \emph{informed} setting.

Our goal is to maximize the expected cumulative return, or equivalently, to minimize the regret. The conventional definition of regret is to compete against the best fixed policy in hindsight:
\begin{equation}
    \label{eqn:strong-regret}
    \Regret'(K) := \sup_{\mu} \sum_{k=1}^{K} \bigl(  V_{1}^{\mu, \nu^{k}}(s_{1}^{k}) - V_{1}^{\mu^{k}, \nu^{k}}(s_{1}^{k}) \bigr),
\end{equation}
where the superscript $k$ denotes the corresponding objects in the $k$th episode. Although we use this compact notation, the regret depends on both $\mu^k$ and $\nu^k$.

However, even in the informed setting, achieving sublinear regret in this form is computationally hard~\citep{bai2020near}. For online learning in unknown MGs, the problem is statistically hard (Section~\ref{sec:lower}), thus is still intractable even if we have infinite computational power. 

Therefore, by noting 
\begin{align*}
    \max_{\mu\in M} V_{1}^{\mu, \nu^{k}}(s_{1}^{k}) \ge V_{1}^{\mu^{\ast}, \nu^{k}}(s_{1}^{k}) \ge V_{1}^{\ast}(s_{1}^{k}),
\end{align*}
we consider a more modest goal. That is, to minimize the following regret against the minimax value of the game, which has appeared in prior works \citep{brafman2002r, xie2020learning}: 
\begin{align}
    \label{eqn:weak-regret}
    \Regret(K) := \nlsum_{k=1}^{K} \bigl( V_{1}^{\ast}(s_1^{k}) - V_{1}^{\mu^{k}, \nu^{k}}(s_{1}^{k}) \bigr).
\end{align}

\section{Statistical hardness of online learning in unknown MGs}
\label{sec:lower}

As mentioned above, we use the minimax value of the game as the benchmark for online learning in unknown MGs. In contrast, in adversarial MDPs~\citep{jin2019learning}, it is more common to compete against the best policy in hindsight (using regret~\eqref{eqn:strong-regret}). In this section, we justify our usage of the weaker notion of regret \eqref{eqn:weak-regret} by showing that, in general, competing against the best policy in hindsight is statistically intractable. In particular, we show that in this case, the regret has to be either linear in $K$ or exponential in $H$. 

\begin{theorem}[Statistical hardness for online learning in unknown MGs]
  \label{theorem:lower-bound-uncoupled}
  For any $H\ge 2$ and $K\ge 1$, there exists a two-player zero-sum MG with horizon $H$, $|S_h|\le 2$, $|A_h|\le 2$, $|B_h|\le 4$ such that any algorithm for unknown MGs suffers the following worst-case one-sided regret:
  \begin{align*}
    \sup_{\mu} \nlsum_{k=1}^K\Bigl(V_1^{\mu,\nu^k}(s_1)-\E_{\mu^k}V_1^{\mu^k,\nu^k}(s_1)\Bigr)
    \ge \Omega\bigl(\min\bigl\{\sqrt{2^HK}, K\bigr\}\bigr).
  \end{align*}
  In particular, any algorithm has to suffer linear regret unless $K\ge \Omega(2^H)$.
\end{theorem}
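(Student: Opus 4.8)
The plan is to prove the bound by reducing the unknown-MG problem to a lower bound for MDPs with \emph{adversarial transitions} under \emph{bandit feedback}, which is the content of Lemma~\ref{lem:lb-adv-mdp} and the gadget $M_{X,Y}$ in Figure~\ref{fig:mdp-short}. The observation driving the reduction is that, since our player never sees the opponent's action $b_h$, a \emph{fixed} MG together with a \emph{time-varying} opponent policy $\nu^k$ behaves, from our player's point of view, like a single MDP whose transition kernel is re-chosen each episode: the effective kernel is $\bar{\Trans}^k_h(\cdot\mid s,a)=\E_{b\sim \nu^k_h(\cdot\mid s)}[\Trans_h(\cdot\mid s,a,b)]$, and the unobservability of $b_h$ is exactly bandit feedback on this kernel. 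The benchmark $\sup_\mu\sum_k V_1^{\mu,\nu^k}$ is then the best \emph{fixed} policy against the realized sequence of kernels. I would therefore first build the hard adversarial-transition MDP, prove its lower bound, and then embed it into a fixed $\mg$ by choosing $\Trans_h(\cdot\mid s,a,b)$ rich enough that ranging $b$ over $|\cB_h|\le 4$ symbols (and mixing through $\nu^k$) realizes every kernel the hard instance needs. Drawing the opponent's per-episode choice from a fixed hard distribution (rather than adaptively) makes the bound \emph{statistical}: it holds against any algorithm, even with unbounded computation.

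The core is thus the MDP lower bound, which I would prove by embedding a stochastic bandit with $2^H$ arms. The $2^H$ deterministic policies (action strings $a_{1:H}\in\{0,1\}^H$) play the role of arms, and I would plant a needle $x^\ast\in\{0,1\}^H$ drawn uniformly at random. The instance is designed so that the \emph{only} statistical signal about $x^\ast$ is a reward advantage of size $\Delta$ obtained when the entire trajectory matches $x^\ast$, with all non-matching trajectories inducing the \emph{same}, needle-independent observation law. Two structural properties must hold: (i) \emph{no partial-progress feedback} — intermediate states and rewards reveal nothing about $x^\ast$, so information accrues only on full alignment; and (ii) each (possibly randomized) policy puts total ``alignment mass'' at most $O(1)$ across the $2^H$ needles, the MDP analogue of ``one arm per pull.'' Given (i)--(ii), a standard change-of-measure computation bounds the per-episode KL divergence between the needle-$x^\ast$ law and the needle-free law by $O(\Delta^2\,p^k_{x^\ast})$ with $\sum_{x^\ast}p^k_{x^\ast}\le 1$; averaging over the uniform prior gives total information $\lesssim \Delta^2 K\,2^{-H}$. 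Choosing $\Delta=\Theta\bigl(\min\{1,\sqrt{2^H/K}\}\bigr)$ keeps the laws indistinguishable, so any algorithm mis-aligns on a constant fraction of episodes and pays per-episode regret $\Omega(\Delta)$, yielding $\Omega(K\Delta)=\Omega(\min\{\sqrt{2^HK},K\})$; in the regime $K\lesssim 2^H$ one takes $\Delta=\Theta(1)$ and the learner cannot sample the needle even once, forcing $\Omega(K)$. I would formalize the indistinguishability with Le Cam's two-point method (needle $x^\ast$ versus no needle), or equivalently an Assouad/Fano argument over the $2^H$ needles.

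The step I expect to be the main obstacle is realizing properties (i)--(ii) within the stringent budget $|\cS_h|\le 2$, $|\cA_h|\le 2$, $|\cB_h|\le 4$ — in particular enforcing \emph{no partial-progress feedback} when the state is fully observed and carries only one bit. A naive combination lock fails: if the transition out of the ``alive'' state depended detectably on whether $a_h=x^\ast_h$, our player would learn one needle bit per step and crack the lock in $\mathrm{poly}(H)$ episodes, destroying the exponential bound. The resolution, and the role of the two parameters of $M_{X,Y}$ (with $y'=1-y$ in Figure~\ref{fig:mdp-short}), is to let the opponent spend its extra action bits on a \emph{confounding mask}: it relabels/randomizes the observed state each step so that the marginal law of the observed trajectory is symmetric and \emph{independent of $x^\ast$}, while the needle survives only in the conjunction governing the terminal reward. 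The delicate part of the write-up is verifying that this masking makes intermediate observations exactly needle-uninformative — so that the per-episode KL genuinely carries the $2^{-H}$ factor rather than merely $1/H$ — and then checking that the chosen fixed $\Trans$ and opponent mixtures reproduce the required kernels, so that the MDP bound transfers verbatim to the $\mg$ and to the one-sided regret as stated.
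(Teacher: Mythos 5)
Your proposal is correct and follows essentially the same route as the paper: the same reduction in which the opponent's policy realizes adversarial transitions invisible to the max-player (the paper's Table~\ref{tab:trans} with $|\cB_h|=4$), the same combination-lock instance $M_{X,Y,\eps}$ where the uniformly random mask $Y$ erases all partial-progress feedback so that only the terminal reward carries information about $X$, and the same identification with a $2^H$-armed bandit with gap $\eps=\Theta(\min\{\sqrt{2^H/K},1\})$. The only difference is presentational: you re-derive the bandit lower bound via a change-of-measure/Fano computation, whereas the paper's proof of Lemma~\ref{lem:lb-adv-mdp} constructs an explicit simulation (Algorithm~\ref{algorithm:bandit-to-adversarial-mdp}) and invokes the classical stochastic-bandit lower bound as a black box.
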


Here we give a sketch of our proof, while the full proof is deferred to Appendix \ref{appendix:proof-lower-bound-uncoupled}.

We start by considering online learning in (single-agent) MDPs, where the reward and transition function in each episode are adversarially determined, and the goal is to compete against the best (fixed) policy in hindsight. In the following lemma we show that this problem is statistically hard; see Lemma~\ref{lem:lb-adv-mdp} in the appendix for its formal statement.

\begin{customlem}{(informal)}
    \label{lem:lb-adv-mdp-infml}
    For any algorithm, there exists a sequence of single agent MDPs with horizon $H$, $S=O(H)$ states and $A=O(1)$ actions, such that the regret defined against the best policy in hindsight is $\Omega(\min\{\sqrt{2^HK}, K\})$.
\end{customlem}

\begin{rmk}
    The above lemma is different from a previous hardness result in~\citet{yadkori2013online}, which states that this problem is \emph{computationally} hard.
\end{rmk}

We now briefly explain how this family of hard MDPs is constructed, which is inspired by the ``combination lock'' MDP~\citep{du2019provably}. Every MDP $M_{X,Y}$ is specified by two $H$-bit strings: $X,Y\in\{0,1\}^H$. The states are $\{s_{0,0},s_{0,1},s_{1,1},\cdots,s_{0,H},s_{1,H}\}$. As shown in Figure~\ref{fig:mdp-short}, $M_{X,Y}$ has a layered structure, and the reward is nonzero only at the final layer. The only way to achieve the high reward is to follow the path $s_{0,0}\to s_{y_1,1}\to \cdots s_{y_H,H}$. Thus, the corresponding optimal policy is $\pi(s_{w,h})=x_h\oplus w$, which is only a function of $X$. Here, $\oplus$ denotes the bitwise exclusive or operator.

Now, in each episode, $Y$ is chosen from a uniform distribution over $\{0,1\}^H$ while $X$ is fixed. When the player interacts with $M_{X,Y}$, since $Y$ is uniformly random, it gets no effective feedback from the observed transitions, and the only informative feedback is the reward at the end. However, achieving the high reward requires guessing every bit of $X$ correctly. This ``needle in a haystack'' situation makes the problem as hard as a multi-armed bandit problem with $2^H$ arms. The regret lower bound immediately follows.

Next, we use the hard family of MDPs in Lemma~\ref{lem:lb-adv-mdp} to prove Theorem~\ref{theorem:lower-bound-uncoupled} by reducing the adversarial MDP problem to online learning in unknown MGs. The construction is straightforward. The state space and the action space for the max-player are the same as that in the original MDP family. The min-player has control over the transition function and reward at each step, and executes a policy such that the induced MDP for the max-player is the same as $M_{X,Y}$. This is possible using only $B=O(1)$ actions as $M_{X,Y}$ has a layered structure. Online learning in unknown MGs then simulates the online learning in the adversarial MDP problem, and thus has the same regret lower bound.

\textbf{Classes of policies.} 
In Section~\ref{sec:prelim}, we define the policy $\mu$ by mappings from $\cS_h$ to a distribution on $\cA_h$ at each step $h$. Such policies are called \emph{Markov policies} \citep{bai2020near}. The policies induced by the algorithms in the remaining part of this paper are always Markov policies. However, our lower bound also holds for \emph{general policies}~\citep{bai2020near}. Here, for an informed max-player the input of $\mu_{h}$ can be the history $(s_1, a_1, b_1, r_1, \cdots, s_h)$, while for a max-player in an unknown MG the input of $\mu_h$ can be the history $(s_1, a_1, r_1, \cdots, s_h)$. In words, the lower bound holds even for policies that depend on histories.

\textbf{Regret minimization in self-play.} 
We emphasize that our lower bound applies to online learning in unknown MGs. For the self-play setting, people indeed minimize the strong regret~\eqref{eqn:strong-regret} as an intermediate step toward PAC guarantees~\citep{bai2020provable, bai2020near, xie2020learning}. This is possible because in self-play \emph{both} players are running the policies specified by the algorithm designer. Therefore, they do not need to worry about the adversarial scenario described in the lower bound here.

\section{The \nashvol{} algorithm}
\label{sec:vlearning}

\begin{algorithm}[t]
    \caption{Optimistic Nash V-learning for Online Learning (\nashvol{})}
    \label{alg:nash-v}
    \begin{algorithmic}[1]
        \State {\bfseries Require:} Learning rate $\{\alpha_t\}_{t\ge 1}$, exploration bonus $\{\beta_t\}_{t\ge 1}$, policy update parameter $\{\eta_t\}_{t\ge 1}$
        \State {\bfseries Initialize:} for any $h\in [H], s\in \cS_h, a\in \cA_h$,
        $\uV_{h}(s)\setto H$, $\uL_{h}(s, a)\setto 0$, $N_{h}(s)\setto 0$, $\mu_h(a\vert s) \setto 1/|\cA_h|$.
        \For{episode $k=1,\dots,K$}
            \State Receive $s_1$
            \For{step $h=1,\dots, H$}
                \State Take action $a_h \sim \mu_h(\cdot| s_h)$
                \State Observe return $r_h$ and next state $s_{h+1}$
                \State Increase counter $t = N_{h}(s_h)\setto N_{h}(s_h) + 1$
                \State $\uV_h(s_h) \setto (1-\alpha_t)\uV_h(s_h) + \alpha_t(r_h + \uV_{h+1}(s_{h+1}) + \ubeta_t)$ 
                \For{all actions $a\in \cA_h$}
                    \State $\ul_h(s_h, a) \setto (H - r_h - \uV_{h+1}(s_{h+1})) \I(a_h=a) / (\mu_h(a_h \vert s_{h}) + \ueta_t)$
                    \State $\uL_h(s_h, a) \setto (1 - \alpha_t ) \uL_h(s_h, a) + \alpha_t \ul_h(s_h, a)$
                \EndFor
                \State Update policy $\mu$ by 
                \begin{align*}
                    \mu_h(\cdot \vert s_h) \setto \frac{\exp\{ -\ueta_t \uL_{h}(s_h, \cdot) / \alpha_t \}}{\sum_{a} \exp\{ -\ueta_t \uL_{h}(s_h, a) / \alpha_t \}}   
                \end{align*}
            \EndFor
        \EndFor
    \end{algorithmic}
\end{algorithm}

In this section, we introduce the \nashvol{} algorithm and its regret guarantees for online  learning in two-player zero-sum \emph{unknown} Markov games. We show that not only can we achieve a sublinear regret in this challenging setting, but the regret bound can be independent of the size of the opponent's action space as well.

\paragraph{The \nashvol{} algorithm.}
\nashvol{} is a variant of V-learning algorithms. \citet{bai2020near} first propose \nashvsp{} as a near-optimal algorithm for the self-play setting of two-player zero-sum MGs. See the discussion at the end of this section for a detailed comparison between \nashvol{} and \nashvsp{}.

In \nashvol{} (Algorithm~\ref{alg:nash-v}), at each time step $h$, the player interacts with the environment, performs an incremental update to $\uV_h$, and updates its policy $\mu_h$. Note that the estimated value function $\uV_h$ is only used for the intermediate loss $l_h(s_h,\cdot)$ in this time step, but not used in decision making. To encourage exploration in less visited states, we add a bonus term $\beta_t$. As we will see in Section~\ref{sec:proof-sketch}, this update rule is optimistic, i.e., $\uV_h$ is an upper confidence bound (UCB) on the minimax value $\oV{h}$ of the MG. Then the player samples the action according to the exponentially weighted averaged loss $L_h(s_h,\cdot)$, which is a popular decision rule in adversarial environments~\citep{auer1995gambling}.

\paragraph{Intuition behind V-learning.} 
Most existing provably efficient tabular RL algorithms learn a Q-table (table consisting of Q-values). However, since state-action pairs are necessary for updating the Q-table, for online learning in MGs, algorithms based on it inevitably require observing the opponent's actions and are thus inapplicable to unknown MGs. In contrast, \nashvol{} does not need to maintain the Q-table at all and bypasses this challenge naturally. 

Moreover, learning a Q-value function in two-player Markov games usually results in a regret or sample complexity that depends on its size $SAB$, whether in the self-play setting, such as VI-ULCB~\citep{bai2020provable} and \nashqsp{}~\citep{bai2020near}, OMNI-VI-offline~\citep{xie2020learning}, or in the online setting, such as OMNI-VI-online~\citep{xie2020learning} and \nashqol{} (Appendix~\ref{sec:nashq}). In contrast, V-learning is promising in removing the dependence on $B$, as formalized in Theorem~\ref{thm:nash-v}. 

Note that we analyze \nashqol{} in Appendix~\ref{sec:nashq} to more clearly demonstrate \nashvol{}'s advantage of avoiding learning a Q-table. \nashqol{} is a Q-learning-type algorithm for online MGs adapted from \nashqsp{}. It updates the Q-values by a termporal difference method like \nashvol{} but makes decisions based on the Q-values instead. Therefore, \nashqol{} applies only to the informed setting and its regret depends on $AB$ (Theorem~\ref{thm:nash-q}).

\paragraph{Favoring more recent samples.} 
Despite the above noted advantages of V-learning, the \nashvsp{} algorithm~\citep{bai2020near} may have a regret bound that is linear in $K$, as indicated by~\eqref{eqn:nash-v-regret} in Theorem~\ref{thm:nash-v} and discussed in Section~\ref{sec:proof-sketch} in more detail. To resolve this problem, we adopt a different set of hyperparameters to learn more aggressively by giving more weight to more recent samples. 
Concretely, for the self-play setting, \citet{bai2020near} specify the following hyperparameters for \nashvsp{}: 
\begin{align*}
    \alpha_t = \tfrac{H + 1}{H + t}, \;
    \ubeta_t = c\sqrt{\tfrac{H^4 A \iota}{t}}, \;
    \ueta_t = \sqrt{\tfrac{\log A}{At}},
\end{align*}
where $\iota$ is a log factor defined later.
For the online setting, we set these hyperparameters as:  
\begin{align}  \label{eqn:alpha-beta-eta}
    \alpha_{t} = \tfrac{GH + 1}{GH + t}, \;
    \ubeta_t = c  \sqrt{\tfrac{GH^3 A\iota}{t}}, \;
    \ueta_t = \sqrt{\tfrac{GH \log A}{At}}, 
\end{align}
where $G\ge 1$ is a quantity that we tune. Ostensibly, these changes may appear small, but they are essential to attaining a sublinear regret.
\begin{rmk}
    Compared with $\alpha_t = \nicefrac{1}{t}$, the learning rate $\alpha_t = \nicefrac{H+1}{H+t}$ first proposed in~\citep{jin2018q} already favors more recent samples. Here we go one step further: our algorithm learns even more aggressively by taking $\alpha_t = \nicefrac{GH+1}{GH+t}$ with $G\ge 1$. Moreover, we choose a larger $\eta_t$ to make our algorithm care more about more recently incurred loss. $\beta_t$ is set accordingly to achieve optimism.
\end{rmk}
We call this variant of V-learning \nashvol{}, for which we prove the following regret guarantees. 
\begin{theorem}[Regret bounds]  \label{thm:nash-v}
    For any $p \in (0, 1)$, let $\iota = \log (\nicefrac{HSAK}{p})$. If we run \nashvol{} with our hyperparameter specification~\eqref{eqn:alpha-beta-eta} for some large constant $c$ and $G \ge 1$ in an online two-player zero-sum MG, then with probability at least $1 - p$, the regret in $K$ episodes satisfies
    \begin{align}  \label{eqn:nash-v-regret}
        \Regret(K) 
        = \bigo\bigl( \sqrt{GH^5 SAK\iota} + KH/G + H^2 S \bigr).
    \end{align}
    In particular, by taking $G = H^{-1}(K/SA)^{\nicefrac{1}{3}}$ if $K \ge H^3 SA$ and $G = K^{\nicefrac{1}{3}}$ otherwise, with probability at least $1 - p$, the regret satisfies 
    \begin{align*}
        \Regret(K) = 
        \begin{cases}
            \bigotilde\bigl( H^2 S^{\frac{1}{3}} A^{\frac{1}{3}} K^{\frac{2}{3}} + H^2 S \bigr), \text{ if } K \ge H^3SA, \\
            \bigotilde\bigl( \sqrt{H^5 SA} K^{\frac{2}{3}} + H^2 S \bigr), \text{ otherwise.} \\
        \end{cases}
    \end{align*}
\end{theorem}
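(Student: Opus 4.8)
The plan is to run the optimistic value-backup-plus-adversarial-bandit argument that underlies V-learning, but to track every dependence on the free parameter $G$. Fix a step $h$ and a state $s\in\cS_h$; let $t=N_h^k(s)$ be the number of visits strictly before episode $k$, let $k^1<\cdots<k^t$ be those episodes, and write $\uV_h^k$ for the table at the start of episode $k$. With the weights $\alpha_t^i:=\alpha_i\prod_{j=i+1}^t(1-\alpha_j)$, which satisfy $\sum_{i=1}^t\alpha_t^i=1$ and $\prod_{j=1}^t(1-\alpha_j)=0$ for $t\ge1$ (since $\alpha_1=1$), unrolling the update in Algorithm~\ref{alg:nash-v} gives
\begin{align*}
    \uV_h^k(s)=\nlsum_{i=1}^t\alpha_t^i\bigl(r_h^{k^i}+\uV_{h+1}^{k^i}(s_{h+1}^{k^i})+\ubeta_i\bigr).
\end{align*}
First I would prove \emph{optimism}: with probability at least $1-p$, $\uV_h^k(s)\ge\oV{h}(s)$ for all $(h,s,k)$, by backward induction on $h$. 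The inductive step compares this weighted backup against the minimax Bellman backup; the opponent's influence is absorbed through the regret guarantee of the exponential-weights update on the importance-weighted losses $\ul_h$ (certifying that the player's averaged play is nearly a best response), and the bonus $\ubeta_t$ is calibrated so that $\sum_i\alpha_t^i\ubeta_i$ dominates both the martingale fluctuation of the transition backup and this bandit regret. Given optimism, the minimax inequality $\oV{1}(s_1^k)\le\uV_1^k(s_1^k)$ reduces the target to $\Regret(K)\le\nlsum_k\delta_1^k$, where $\delta_h^k:=\uV_h^k(s_h^k)-V_h^{\mu^k,\nu^k}(s_h^k)$.

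Next I would derive the central per-step recursion for $D_h:=\nlsum_k\delta_h^k$. Subtracting the Bellman equation $V_h^{\mu^k,\nu^k}=\D_{\mu_h^k,\nu_h^k}[r_h+\Trans_h V_{h+1}^{\mu^k,\nu^k}]$ from the unrolled representation, regrouping visits by state, and re-indexing so that each past episode receives its total future weight $\sum_{t\ge i}\alpha_t^i=1+\tfrac1{GH}$, yields
\begin{align*}
    D_h\le\Bigl(1+\tfrac1{GH}\Bigr)D_{h+1}+\xi_h.
\end{align*}
Here $1+\tfrac1{GH}$ is exactly the per-level leakage of our learning rate, and $\xi_h$ collects three error sources: (i) the accumulated weighted bonuses, (ii) a martingale difference capturing the deviation of the sampled next-state value $\uV_{h+1}(s_{h+1})$ from its conditional mean $\Trans_h\uV_{h+1}$ together with the sampling of $a_h$, and (iii) the weighted exponential-weights regret of the policy update against the realized adversarial losses. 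Controlling (iii) is where the enlarged step size $\ueta_t$ of~\eqref{eqn:alpha-beta-eta} is used, via the second-moment bound $\E[\sum_a\mu_h(a\vert s)\ul_h(s,a)^2]=\bigo(H^2A)$ of the importance-weighted loss.

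I would then unroll the recursion from $h=1$ to $H$, using $\delta_{H+1}^k=0$, to get $D_1\le\sum_{h=1}^H\xi_h+\tfrac1{GH}\sum_{h=1}^HD_{h+1}$. The second sum is the source of the bias term: bounding it crudely by $\delta_{h+1}^k\le H$ gives $\tfrac1{GH}\cdot H\cdot KH=\bigo(KH/G)$. For the statistical part I would aggregate the $\xi_h$ using $\sum_{i=1}^t\alpha_t^i\ubeta_i=\bigo(\ubeta_t)$ and the Cauchy--Schwarz/pigeonhole bound $\sum_{s}\sum_{j=1}^{N_h(s)}j^{-1/2}\le\bigo(\sqrt{SK})$ (since $\sum_sN_h(s)=K$); with the bonus carrying $\sqrt{GH^3A\iota}$ and the extra factor $H$ from summing over the $H$ levels, the bonus contribution is $\bigo(\sqrt{GH^5SAK\iota})$, into which the martingale term (Azuma--Hoeffding) and the bandit term (which the bonus is designed to match) are absorbed, up to an additive $\bigo(H^2S)$ from lower-order counting terms. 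This yields~\eqref{eqn:nash-v-regret}. The stated rates then follow by balancing $\sqrt{GH^5SAK\iota}$ against $KH/G$: the minimizer $G=H^{-1}(K/SA)^{1/3}$ equalizes the two terms and satisfies $G\ge1$ exactly when $K\ge H^3SA$, in which regime both equal $\bigotilde(H^2S^{1/3}A^{1/3}K^{2/3})$; when $K<H^3SA$ the constraint $G\ge1$ binds, and $G=K^{1/3}$ gives the dominant $\bigotilde(\sqrt{H^5SA}\,K^{2/3})$.

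The hard part will be term (iii): establishing the weighted exponential-weights regret for the \emph{non-stationary} losses $\ul_h$ and verifying that the bonus $\ubeta_t=c\sqrt{GH^3A\iota/t}$ is simultaneously large enough to secure optimism (dominating both the transition martingale and this regret) and small enough to keep the statistical error at $\sqrt{GH^5SAK\iota}$ rather than an order higher. Getting the exact powers of $H$ here---so that the aggressive $\alpha_t$ and $\ueta_t$ do not inflate the second-order Hedge term---is the crux, and it is precisely this calibration that makes the single parameter $G$ trade the leakage bias $KH/G$ against the amplified statistical error $\sqrt{G}\cdot\sqrt{H^5SAK\iota}$ to produce a sublinear bound.
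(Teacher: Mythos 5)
Your proposal follows the paper's architecture---optimism certified by the adversarial-bandit (Hedge) regret plus bonus calibration, a per-level recursion obtained from the regrouping identity $\sum_{t\ge i}\alpha_t^i=1+\tfrac{1}{GH}$, a pigeonhole bound for the bonuses, and the final balancing of $G$---and your optimism step and parameter balancing are right. The gap is in the recursion. After regrouping, what gets inflated by the factor $(1+\tfrac{1}{GH})$ is the \emph{full} backup $\sum_k \D_{\mu^k,\nu^k}[r_h+\Trans_h \uV_{h+1}^{k}](s_h^k)$, a quantity of order $H$ per episode, not merely the difference $\delta_{h+1}^k$. Splitting it as the paper does,
\begin{align*}
&(1+\tfrac{1}{GH})\,\D_{\mu^k,\nu^k}[r_h+\Trans_h \uV_{h+1}^{k}](s_h^k) - \D_{\mu_h^k,\nu_h^k}[r_h+\Trans_h V_{h+1}^{\mu^k,\nu^k}](s_h^k) \\
&\qquad = (1+\tfrac{1}{GH})(\delta_{h+1}^k+\gamma_h^k) + \tfrac{1}{GH}\,\D_{\mu_h^k,\nu_h^k}[r_h+\Trans_h V_{h+1}^{\mu^k,\nu^k}](s_h^k),
\end{align*}
the correct recursion carries an additional additive term $\tfrac{1}{GH}\sum_k \D_{\mu_h^k,\nu_h^k}[r_h+\Trans_h V_{h+1}^{\mu^k,\nu^k}](s_h^k)\le K/G$ at every level; this term---with the $\tfrac{1}{GH}$ deliberately attached to the \emph{true} value function, which is bounded by $H$---is the actual source of $KH/G$. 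Your recursion $D_h\le(1+\tfrac1{GH})D_{h+1}+\xi_h$, with $\xi_h$ consisting only of bonuses, martingale terms and Hedge regret, omits it. If that recursion were valid, geometric unrolling with $(1+\tfrac1{GH})^H\le e$ would give $D_1\lesssim\sum_h\xi_h=\bigotilde\bigl(\sqrt{GH^5SAK\iota}+H^2S\bigr)$ with no bias term at all, so $G=1$ (i.e., \nashvsp{}'s parameters) would already yield $\bigotilde(\sqrt{K})$ regret---contradicting the lag phenomenon that the parameter $G$ exists to control: against a drifting opponent the weighted average $\uV_h^k$ genuinely trails the non-stationary targets $V_h^{\mu^k,\nu^k}$ by $\Theta(K/G)$ per level, exactly as the paper explains in Section~\ref{sec:proof-sketch}.

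Your fallback---bounding $\tfrac1{GH}\sum_h D_{h+1}$ via ``$\delta_{h+1}^k\le H$''---is also invalid on its own terms: Algorithm~\ref{alg:nash-v} never truncates $\uV$ at $H$, and already after the first visit to a state at step $H$ one has $\uV_H(s)=r_H+\ubeta_1$ with $\ubeta_1=c\sqrt{GH^3A\iota}\gg H$; in general $\uV$ exceeds $H$ by up to $\Theta(H\sqrt{GH^3A\iota})$ at rarely visited states. Substituting the correct bound on $\uV$ into your crude estimate turns the bias into a term of order $K\sqrt{H^5A\iota/G}$, and no choice of $G$ can then balance it against $\sqrt{GH^5SAK\iota}$ below roughly $K^{3/4}$, so the stated $K^{2/3}$ bound would not follow from that route. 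A final minor point: the Hedge regret (your term (iii)) is not needed in the recursion at all---after the Azuma--Hoeffding step, the past backups are evaluated under exactly the policies $(\mu^{k^i},\nu^{k^i})$ that were played, so the bandit regret enters only in the optimism lemma; keeping it in $\xi_h$ is harmless, but it cannot substitute for the missing $K/G$ bias term.
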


Theorem~\ref{thm:nash-v} shows that a sublinear regret against the minimax value of the MG is achievable for online learning in unknown MGs. 
As expected, the regret bound does not depend on the size of the opponent's action space $B$. This independence of $B$ is particularly significant for large $B$, as is the case where our player plays with multiple opponents.
Note that although in Theorem~\ref{thm:nash-v} setting the parameter $G$ requires knowledge of $K$ beforehand, we can use a standard doubling trick to bypass this requirement.

\begin{rmk}
    In \nashvsp{} the parameter $G$ is set to be $1$. Then our choice of $\eta_t$ becomes $\sqrt{\nicefrac{H \log A}{A t}}$, $\sqrt{H}$ times the original policy update parameter. If the other player also adopts the new $\sqrt{\nicefrac{H \log B}{B t}}$ policy update parameter, then the sample complexity of \nashvsp{} can actually be improved upon~\citep{bai2020near} by an $\sqrt{H}$ factor to $\bigotilde(H^5S(A+B) / \epsilon^2)$.
\end{rmk}

\paragraph{Comparison between \nashvol{} and \nashvsp{}.} 
\begin{enumerate}  
\setlength\itemsep{0em}
    \item To achieve near-optimal sample complexity in the self-play setting, \nashvsp{} needs to construct upper and lower confidence bounds not only for the \emph{minimax value} of the game, but also for the \emph{best response}. As a result, it uses a complicated certified policy technique, and it must store the whole history in the past $K$ episodes for resampling from in each step. By comparing with the \emph{minimax value} directly, we can make \nashvol{} provably efficient without extracting a certified policy. Therefore, \nashvol{} only needs $\cO(SAH)$ space instead of $\cO(KSAH)$, and the resampling procedure is no more necessary. 
    \item A key feature of the proof in \citep{bai2020near} is to make full use of a symmetric structure, which naturally arises because in the self-play setting we can control both players to follow the same learning algorithm. However, this property no longer holds for the online setting, and we must take a different proof route. Algorithmically, we need to learn more aggressively to make \nashvol{} provably efficient.
    \item \nashvol{} also works in multi-player general-sum MGs---see Section~\ref{sec:multi}.
\end{enumerate}

\section{Multi-player general-sum games}
\label{sec:multi}

In this section, we extend the regret guarantees of \nashvol{} to multi-player general-sum MGs, demonstrating the generality of our algorithm. Informally, we have the following corollary.

\begin{customcor}{(informal)} 
    \label{cor:multi-mg}
    If we run \nashvol{} with our hyperparameter specificified in~\eqref{eqn:alpha-beta-eta} for our player in an online multi-player general-sum MG, then with high probability, for sufficiently large $K$,
    \begin{align*}
        \Regret(K) 
        = \bigotilde\bigl( H^2 S^{\frac{1}{3}} A^{\frac{1}{3}} K^{\frac{2}{3}} + H^2 S \bigr),
    \end{align*}
    where $A$ denotes the size of our player's action space.
\end{customcor}

The above corollary highlights the significance of removing the dependence on $B$ in the regret bound. In particular, in a multi-player game the size of the opponents' joint action space $B$ grows exponentially in the number of opponents, whereas the regret of \nashvol{} only depends on the size of our player's action space $A$. The savings arise because \nashvol{} bypasses the need to learn Q-tables, and the multi-player setting makes no real difference in our analysis. To formally present the construction, we need to first introduce some notation.

Consider the $m$-player general-sum MG 
\begin{align}  \label{eqn:mp-gs-mg}
    \mg_m(\cS, \{\cA_i\}_{i=1}^m, \Trans, \{r_{i}\}_{i=1}^{m}, H),     
\end{align}
where $\cS$, $H$ follow from the same definition in two-player zero-sum MGs, and 
\begin{list}{–}{\leftmargin=1.5em}
    \setlength\itemsep{0em}
    \item for each $i\in [m]$, player $i$ has its own action space $\cA_i = \bigcup_{h\in [H]} \cA_{i, h}$ and return function $r_i = \{r_{i, h}: \cS_h \times \bigotimes_{i=1}^{m} \cA_{i, h} \to [0, 1] \}_{i=1}^{m}$, and aims to maximize its own cumulative return (here $\bigotimes$ denotes the Cartesian product of sets);
    \item $\Trans$ is a collection of transition functions $\{\Trans_h: \cS_h \times \bigotimes_{i=1}^{m} \cA_{i, h} \to \Delta(\cS_{h+1})\}_{h\in [H]}$.
\end{list}

Like in two-player MGs, let 
\begin{align*}
    S := \sup_{h\in [H]} |\cS_h|, \quad 
    A_i := \sup_{h\in [H]} |\cA_i, h| \text{ for all } i \in [m]. 
\end{align*} 
Online learning in an unknown multi-player general-sum MG can be reduced to that in a two-player zero-sum MG.
Concretely, suppose we are player $1$, then online learning in unknown MGs \eqref{eqn:mp-gs-mg} is indistinguishable from that in the two-player zero-sum MG specified by $(\cS, \cA_1, \cB, \Trans, r_1, H)$ where $\cB = \bigotimes_{i=2}^{m} \cA_i$, since we only observe and care about player $1$'s return. 
For all states $s\in \cS_1$, define the value function using $r_1$ as 
\begin{align*}
    V_{h}^{\mu, \nu}(s) := \E_{\mu, \nu}[\nlsum_{h'=h}^{H} r_{1, h'}(s_{h'}, a_{h'}, b_{h'}) \vert s_h = s], 
\end{align*}
and define the minimax value of player $1$ as 
\begin{align*}
    \oV{1}(s) := \max_{\mu} \min_{\nu} V_{1}^{\mu, \nu}(s) = \min_{\nu} \max_{\mu} V_{1}^{\mu, \nu}(s), 
\end{align*}
which is no larger than the value at the Nash equilibrium of the multi-player general-sum MG. 
Then we define the regret against the minimax value of player $1$ as 
\begin{align*}
    \Regret(K) := \nlsum_{k=1}^{K} \bigl( \oV{1}(s_1^{k}) - V_{1}^{\mu^{k}, \nu^{k}}(s_{1}^{k}) \bigr).
\end{align*}
We argue that this notion of regret is reasonable since we have control of only player $1$ and all opponents may collude to compromise our performance.
Then immediately we obtain the following corollary from Theorem~\ref{thm:nash-v}.

\begin{corollary}[Regret bound in multi-player MGs]
    For any $p \in (0, 1)$, let $\iota = \log (\nicefrac{HSAK}{p})$. If we run \nashvol{} with our hyperparameter specification~\eqref{eqn:alpha-beta-eta} for some large constant $c$ and the above choice of $G$ for player 1 in the online multi-player general-sum MG~\eqref{eqn:mp-gs-mg}, then with probability at least $1 - p$, the regret in $K$ episodes satisfies
    \begin{align*}
        \Regret(K) = 
        \begin{cases}
            \bigotilde\bigl( H^2 S^{\frac{1}{3}} A_1^{\frac{1}{3}} K^{\frac{2}{3}} + H^2 S \bigr), \text{ if } K \ge H^3SA_1, \\
            \bigotilde\bigl( \sqrt{H^5 SA_1} K^{\frac{2}{3}} + H^2 S \bigr), \text{ otherwise.} \\
        \end{cases}
    \end{align*}
\end{corollary}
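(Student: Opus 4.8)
The plan is to prove the corollary by a black-box reduction to Theorem~\ref{thm:nash-v}, exploiting two facts: \nashvol{} never reads the opponents' actions, and the regret bound of Theorem~\ref{thm:nash-v} is already independent of the size of the opponent's action space. First I would make precise the equivalence sketched in the construction preceding the statement: fix player~$1$ and collapse the remaining players into a single min-player with joint action space $\cB = \bigotimes_{i=2}^m \cA_i$ whose (worst-case) objective is to minimize $r_1$. The induced object $(\cS, \cA_1, \cB, \Trans, r_1, H)$ is a genuine two-player zero-sum MG, so the minimax theorem of Section~\ref{sec:prelim} applies and its minimax value is exactly $\oV{1}$ as defined above. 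The crucial observation is that every update in Algorithm~\ref{alg:nash-v} depends only on the state $s_h$, player~$1$'s own action $a_h$, the observed return $r_h = r_{1,h}(s_h, a_h, b_h)$, and the next state $s_{h+1}$ — never on the joint opponent action $b_h = (a_{2,h}, \ldots, a_{m,h})$. Consequently, for any fixed realization of the opponents' (arbitrary, possibly colluding) behavior, the trajectory of internal variables and of the policies $\mu^k$ produced by \nashvol{} in the multi-player game coincides exactly with the trajectory it would produce when run as the max-player in the induced two-player zero-sum MG against the corresponding induced min-player policies $\nu^k$.

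Next I would match the two regret definitions. By construction the multi-player regret $\Regret(K) = \nlsum_{k=1}^K (\oV{1}(s_1^k) - V_1^{\mu^k,\nu^k}(s_1^k))$ is term-by-term identical to the two-player zero-sum regret \eqref{eqn:weak-regret} for the induced game, since $\oV{1}$ is the minimax value of that game and the value functions $V_1^{\mu^k,\nu^k}$ agree under the reduction. I would then invoke Theorem~\ref{thm:nash-v} directly: running \nashvol{} with the hyperparameters \eqref{eqn:alpha-beta-eta} and the stated choice of $G$ in the induced two-player zero-sum MG yields, with probability at least $1-p$, the bound \eqref{eqn:nash-v-regret} with $A$ replaced by the max-player's action-space size, which is now $A_1$. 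Because that bound contains no dependence on the min-player's action-space size $B = \prod_{i=2}^m A_i$, the exponential-in-$m$ factor never appears, and the two regimes $K \ge H^3 S A_1$ and otherwise follow immediately from the corresponding cases of Theorem~\ref{thm:nash-v}.

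The step requiring the most care is justifying that Theorem~\ref{thm:nash-v} applies against an \emph{arbitrary} — rather than algorithmically controlled — sequence of opponent policies $\nu^k$. This is exactly the content of the weak-regret guarantee \eqref{eqn:weak-regret}: its proof makes no assumption about how $\nu^k$ is generated, so the bound holds uniformly over all opponent behaviors, including the adversarial collusion that the benchmark $\oV{1}$ accounts for. The only remaining point is conceptual rather than technical — noting that $\oV{1}$ lower-bounds the general-sum Nash value, so that competing against $\oV{1}$ is a legitimate (if conservative) target — but this is already established in the construction preceding the statement, and no probabilistic or analytic work beyond Theorem~\ref{thm:nash-v} is needed.
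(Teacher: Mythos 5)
Your proposal is correct and follows essentially the same route as the paper: the paper's own proof consists precisely of the reduction you describe --- collapsing players $2,\dots,m$ into a single min-player with joint action space $\cB = \bigotimes_{i=2}^m \cA_i$ and return $r_1$, observing that \nashvol{} never reads $b_h$ so its trajectory is identical in the induced two-player zero-sum MG, and then invoking Theorem~\ref{thm:nash-v} with $A = A_1$. Your added care in noting that the guarantee of Theorem~\ref{thm:nash-v} holds against arbitrary (adaptive, colluding) opponent sequences $\nu^k$ is exactly the reason the paper can state the corollary as following ``immediately.''
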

In the online informed setting, the same equivalence to a two-player zero-sum MG holds, since the other players' actions we observe can be seen as a single action $(a_i)_{i=2}^{m}$, and whether we observe the other players' returns does not help us decide our policies to maximize our own cumulative return. In this setting, the regret bound in \citep{xie2020learning} becomes $\bigotilde(\sqrt{H^3 S^3 \prod_{i=1}^m A_i^3 T})$, which depends exponentially on $m$. On the other hand, since the online informed setting has stronger assumptions than online learning in unknown MGs, the $\bigotilde(H^2 S^{\nicefrac{1}{3}} A_1^{\nicefrac{1}{3}} K^{\nicefrac{2}{3}})$ regret bound of \nashvol{} carries over, which has no dependence on $m$. This sharp contrast highlights the importance of achieving a regret independent of the size of the opponent's action space. 

Furthermore, since in \nashvol{} we only need to update the value function (which has $H S$ entries), rather than update the Q-table (which has $H S\prod_{i=1}^m A_i$ entries) as in~\citep{xie2020learning}, we can also improve the time and space complexity by an exponential factor in $m$. 

\section{Proof sketch of Theorem~\ref{thm:nash-v}}
\label{sec:proof-sketch}

In this section, we sketch the proof of Theorem~\ref{thm:nash-v}. We also highlight an observation that \nashvol{} can perform much better than claimed in Theorem \ref{thm:nash-v}. Moreover, we expose the problem with \nashvsp{} in the online setting, which explains why we favor more recent samples in \nashvol{}.

In the analysis below, we use a superscript $k$ to signify the corresponding quantities at the beginning of the $k$th episode. To express $\uV_{h}^{k}$ in Algorithm~\ref{alg:nash-v} compactly, we introduce the following quantities. 
\begin{align*}
    \alpha_t^0 := \nlprod_{j=1}^{t} (1 - \alpha_j), \quad \alpha_t^i := \alpha_i \nlprod_{j=i+1}^{t} (1 - \alpha_j).
\end{align*}
Let $t := N_{h}^{k}(s)$ and suppose $s$ is previously visited at episodes $k^1,\ldots, k^t \le k$. Then we can express $\uV_{h}^{k}(s)$ as 
\begin{align*}
    \alpha_{t}^{0} H + \nlsum_{i=1}^{t} \alpha_{t}^{i} \bigl( r_{h}(s, a_{h}^{k^i}, b_{h}^{k^i}) + \uV_{h+1}^{k^i}(s_{h+1}^{k^i}) + \ubeta_{i} \bigr).
\end{align*}
It is easy to verify that $\{\alpha_t^i\}_{i=1}^t$ satisfies the normalization property that $\sum_{i=1}^{t} \alpha_{t}^{i} = 1$ for any sequence $\{\alpha_t\}_{t\ge 1}$ and any $t\ge 1$.
Moreover, for $\{\alpha_t\}_{t\ge 1}$ specified in~\eqref{eqn:alpha-beta-eta}, $\{\alpha_t^i\}$  has several other desirable properties (Lemma~\ref{lem:ppt-alpha}), resembling~\citep[Lemma 4.1]{jin2018q}.

\paragraph{Upper confidence bound (UCB).}
In Algorithm~\ref{alg:nash-v}, by bonus $\ubeta_t$ we ensure that $\uV_{h}^{k}$ is an entrywise UCB on $\oV{h}$ using standard techniques~\citep{bai2020near}, building on the normalization property of $\{\alpha_t^i\}_{i=1}^t$ and the key V-learning lemma (Lemma~\ref{lem:v-est-err}) based on the regret bound of the adversarial bandit problem we solve to derive the policy update.

\begin{rmk}
    \begingroup
    \allowdisplaybreaks
    A main difference from the previous UCB framework (e.g., \citet{azar2017minimax}) is that here the gap between $\uV_h^{k}$ and $\oV{h}$ is not necessarily diminishing, which partially explains why we do not achieve the conventional $\bigotilde(\sqrt{T})$ regret. Concretely, by taking $\mu = \mu^{\ast}$ in the V-learning lemma (Lemma~\ref{lem:v-est-err}), we have 
    \begin{align*}
        &\uV_{h}^{k}(s) - \oV{h}(s) \\
        \ge & \nlsum_{i=1}^{t} \alpha_{t}^{i} \D_{\mu_{h}^{\ast}, \nu_{h}^{k^i}} [r_{h} + \Trans_{h} \uV_{h+1}^{k^i}](s) - \D_{\mu_{h}^{\ast}, \nu_{h}^{\ast}}[r_{h} + \Trans_{h} \oV{h+1}](s) \\
        = & \nlsum_{i=1}^{t} \alpha_{t}^{i} \D_{\mu_{h}^{\ast}, \nu_{h}^{k^i}} [\Trans_{h} (\uV_{h+1}^{k^i} - \oV{h+1})](s) + \nlsum_{i=1}^{t}\alpha_{t}^i (\D_{\mu_{h}^{\ast}, \nu_h^{k^i}} - \D_{\mu_{h}^{\ast}, \nu_{h}^{\ast}})[r_{h} + \Trans_{h} \oV{h+1}](s) \\
        \overset{(i)}{\ge} & \nlsum_{i=1}^{t}\alpha_{t}^i (\D_{\mu_{h}^{\ast}, \nu_h^{k^i}} - \D_{\mu_{h}^{\ast}, \nu_{h}^{\ast}})[r_{h} + \Trans_{h} \oV{h+1}](s),
    \end{align*}
    \endgroup
    where $(i)$ follows from the above UCB.
    If the opponent is weak at some step $h\in [H]$ such that for all episodes $k\in [K]$, 
    \begin{align*}
        (\D_{\mu_{h}^{\ast}, \nu_h^{k}} - \D_{\mu_{h}^{\ast}, \nu_{h}^{\ast}})[r_{h} + \Trans_{h} \oV{h+1}](s) \ge C,  
    \end{align*}
    then $\sum_{k=1}^{K} (\uV_{h}^{k}(s) - \oV{h}(s)) \ge CK$. 
    This indicates that the gap between the sum of the UCBs and that of the minimax values can be linear in $K$. As proved below, we actually show that $\nlsum_{k=1}^{K} (\uV_{1}^{k} - V_{1}^{\mu^{k}, \nu^{k}})(s_{h}^{k})$ is sublinear in $K$, which is much stronger than that merely the regret is sublinear if the opponent is weak. In words, \nashvol{} performs much better than claimed in Theorem \ref{thm:nash-v} against a weak opponent.
\end{rmk}

\paragraph{Regret bounds.}
Note that the above proof of the UCB holds for any $G > 0$. We now illustrate what problem appears if $G = 1$ and where the constraint $G\ge 1$ comes from. 
Let ``$\lesssim$'' denote ``$\le$'' up to multiplicative constants and log factors. By definition, we have 
\begin{align*}
    \delta_{h}^{k} 
    := (\uV_{h}^{k} - V_{h}^{\mu^{k}, \nu^{k}})(s_{h}^{k})
    \lesssim \sqrt{\tfrac{GH^3 A\iota}{t}} - \D_{\mu_{h}^{k}, \nu_{h}^{k}} [r_{h} + \Trans_{h} V_{h+1}^{\mu^{k}, \nu^{k}}](s_{h}^{k}) 
    + \nlsum_{i=1}^{t} \alpha_{t}^{i} \D_{\mu^{k^i}, \nu^{k^i}} [r_h + \Trans_{h} \uV_{h+1}^{k^i}](s_{h}^{k}).
\end{align*}
By the same regrouping technique as that in \citet{jin2018q}, for any quantity $f^i$ indexed by $i\in [K]$, 
\begin{align*}
    \nlsum_{k=1}^{K} \nlsum_{i=1}^{t} \alpha_{t}^{i} f^{k^i}
    \le \nlsum_{k'=1}^{K} f^{k'} \nlsum_{t=n_{h}^{k'}}^{\infty} \alpha_{t}^{n_{h}^{k'}}
    \le (1 + \tfrac{1}{GH}) \nlsum_{k=1}^{K} f^{k}.
\end{align*}
Taking $\D_{\mu^{k^i}, \nu^{k^i}} [r_h + \Trans_{h} \uV_{h+1}^{k^i}](s_{h}^{k})$ as $f^i$ and substituting the resulting bound into $\sum_{k=1}^{K} \delta_h^k$ yields
\begin{align*}
    \nlsum_{k=1}^{K} \delta_{h}^{k}
    & \lesssim \nlsum_{k=1}^{K} \bigl( \sqrt{\tfrac{GH^3 A\iota}{t}} + (1 + \tfrac{1}{GH}) \delta_{h+1}^{k} 
    + \tfrac{1}{GH} \D_{\mu_{h}^{k}, \nu_{h}^{k}}[r_{h} + \Trans_{h} V_{h+1}^{\mu^{k}, \nu^{k}}](s_{h}^{k}) \bigr) \\
    & \overset{(i)}{\le} \nlsum_{k=1}^{K} \bigl( \sqrt{\tfrac{GH^3 A\iota}{t}} + (1 + \tfrac{1}{GH}) \delta_{h+1}^{k} + \tfrac{1}{G} \bigr),
\end{align*}
where $(i)$ is owing to $\D_{\mu_{h}^{k}, \nu_{h}^{k}}[r_{h} + \Trans_{h} V_{h+1}^{\mu^{k}, \nu^{k}}](s_{h}^{k}) \le H$.
Since $\sum_{k=1}^{K} \delta_{H+1}^k = 0$, a recursion over $h\in [H]$ for $\sum_{k=1}^{K} \delta_{h}^{k}$ yields 
\begin{align*}
    \nlsum_{k=1}^{K} \delta_{1}^{k} 
    \lesssim (1 + \tfrac{1}{GH})^{H} \bigl( \sqrt{\tfrac{GH^3 A\iota}{t}} + \tfrac{1}{G} \bigr).
\end{align*}
To bound the coefficient $(1 + \tfrac{1}{GH})^{H} \le e$, we need $G \ge 1$.
By noting 
\begin{align*}
    \nlsum_{k=1}^{K} \sqrt{\tfrac{1}{t}} 
    = \nlsum_{k=1}^{K} \sqrt{\tfrac{1}{n_{h}^{k}}} 
    \le \sqrt{S K},
\end{align*}
we obtain 
\begin{align*}
    \Regret(K) \le \sum_{k=1}^{K} \delta_{1}^{k} 
    \lesssim \sqrt{GH^5 SAK\iota} + G^{-1} KH.
\end{align*}
If we take $G = 1$ as in \nashvsp{}, the regret is linear in $K$ and therefore useless. To address this problem, we introduced the tunable parameter $G\ge 1$ that balances the $\sqrt{K}$ and $K$ terms in the above bound to yield a sublinear regret.

\section{Conclusion and Future Work}
In this paper, we study online learning in unknown Markov games using \nashvol{}, which is based on the \nashvsp{} algorithm of~\citet{bai2020near}. \nashvol{} achieves $\bigotilde(K^{\nicefrac{2}{3}})$ regret after $K$ episodes. Furthermore, the regret bound is independent of the size of opponents' action space. It is still unclear whether one can achieve a sharper regret bound, which is a question worthy of future study. We briefly comment on two other future directions.

\textbf{Toward $\bigotilde(K^{\nicefrac{1}{2}})$ regret in MDPs.} 
A key reason why we need to learn more aggressively in online learning is that a symmetric structure (like in the proof of \nashvsp{}) is absent. However, it exists if the opponent plays a \emph{fixed} policy, in which case the Markov game becomes an MDP. To see why, we can imagine the opponent is also executing \nashvol{}, which makes no difference since $B=1$. However, even in that case, a gap remains: we can only upper and lower bound $V^{*}$ but not $V_{h}^{\mu^{k}, \nu^{k}}$. Figuring out how to fill this gap will make \nashvol{} become the first policy-based algorithm without an estimation of Q-value functions that achieves a $\bigotilde(K^{\nicefrac{1}{2}})$ regret for tabular RL.

\textbf{Strong regret for MDPs with adversarial rewards.} Another special case is MDPs with adversarial rewards, where the transitions are fixed across episodes. In this case, achieving sublinear regret using strong regret~\eqref{eqn:strong-regret} is possible~\citep{jin2019learning}. A question is then: does \nashvol{} (or its variants) achieve sublinear regret using the strong regret? Given the many technical differences between adversarial MDPs and online Markov games, it is desirable to resolve these problems in a unified manner. In addition, the form of the model-free update in \nashvol{} should be of independent interest for MDPs with adversarial rewards.

\section*{Acknowledgement}
YT, TY, SS acknowledge partial support from the NSF BIGDATA grant (number 1741341). We thank Yu Bai, Kefan Dong and Chi Jin for useful discussions.

\bibliographystyle{plainnat}
\bibliography{online_mg}

\begin{thebibliography}{34}
\providecommand{\natexlab}[1]{#1}
\providecommand{\url}[1]{\texttt{#1}}
\expandafter\ifx\csname urlstyle\endcsname\relax
  \providecommand{\doi}[1]{doi: #1}\else
  \providecommand{\doi}{doi: \begingroup \urlstyle{rm}\Url}\fi

\bibitem[Auer et~al.(1995)Auer, Cesa-Bianchi, Freund, and
  Schapire]{auer1995gambling}
Peter Auer, Nicolo Cesa-Bianchi, Yoav Freund, and Robert~E Schapire.
\newblock Gambling in a rigged casino: The adversarial multi-armed bandit
  problem.
\newblock In \emph{Proceedings of IEEE 36th Annual Foundations of Computer
  Science}, pages 322--331. IEEE, 1995.

\bibitem[Azar et~al.(2017)Azar, Osband, and Munos]{azar2017minimax}
Mohammad~Gheshlaghi Azar, Ian Osband, and R{\'e}mi Munos.
\newblock Minimax regret bounds for reinforcement learning.
\newblock In \emph{Proceedings of the 34th International Conference on Machine
  Learning-Volume 70}, pages 263--272. JMLR. org, 2017.

\bibitem[Bai and Jin(2020)]{bai2020provable}
Yu~Bai and Chi Jin.
\newblock Provable self-play algorithms for competitive reinforcement learning.
\newblock \emph{arXiv preprint arXiv:2002.04017}, 2020.

\bibitem[Bai et~al.(2020)Bai, Jin, and Yu]{bai2020near}
Yu~Bai, Chi Jin, and Tiancheng Yu.
\newblock Near-optimal reinforcement learning with self-play.
\newblock \emph{arXiv preprint arXiv:2006.12007}, 2020.

\bibitem[Brafman and Tennenholtz(2002)]{brafman2002r}
Ronen~I Brafman and Moshe Tennenholtz.
\newblock R-max-a general polynomial time algorithm for near-optimal
  reinforcement learning.
\newblock \emph{Journal of Machine Learning Research}, 3\penalty0
  (Oct):\penalty0 213--231, 2002.

\bibitem[Brown and Sandholm(2018)]{brown2018superhuman}
Noam Brown and Tuomas Sandholm.
\newblock {Superhuman AI for heads-up no-limit poker: Libratus beats top
  professionals}.
\newblock \emph{Science}, 359\penalty0 (6374):\penalty0 418--424, 2018.

\bibitem[Brown and Sandholm(2019)]{brown2019superhuman}
Noam Brown and Tuomas Sandholm.
\newblock {Superhuman AI for multiplayer poker}.
\newblock \emph{Science}, 365\penalty0 (6456):\penalty0 885--890, 2019.

\bibitem[Cesa-Bianchi and Lugosi(2006)]{cesa2006prediction}
Nicolo Cesa-Bianchi and G{\'a}bor Lugosi.
\newblock \emph{Prediction, learning, and games}.
\newblock Cambridge university press, 2006.

\bibitem[Cheung et~al.(2019)Cheung, Simchi-Levi, and Zhu]{cheung2019non}
Wang~Chi Cheung, David Simchi-Levi, and Ruihao Zhu.
\newblock Non-stationary reinforcement learning: The blessing of (more)
  optimism.
\newblock \emph{Available at SSRN 3397818}, 2019.

\bibitem[Du et~al.(2019)Du, Krishnamurthy, Jiang, Agarwal, Dudik, and
  Langford]{du2019provably}
Simon Du, Akshay Krishnamurthy, Nan Jiang, Alekh Agarwal, Miroslav Dudik, and
  John Langford.
\newblock {Provably efficient RL with Rich Observations via Latent State
  Decoding}.
\newblock In \emph{International Conference on Machine Learning}, pages
  1665--1674, 2019.

\bibitem[Hansen et~al.(2013)Hansen, Miltersen, and Zwick]{hansen2013strategy}
Thomas~Dueholm Hansen, Peter~Bro Miltersen, and Uri Zwick.
\newblock Strategy iteration is strongly polynomial for 2-player turn-based
  stochastic games with a constant discount factor.
\newblock \emph{Journal of the ACM (JACM)}, 60\penalty0 (1):\penalty0 1--16,
  2013.

\bibitem[Hu and Wellman(2003)]{hu2003nash}
Junling Hu and Michael~P Wellman.
\newblock {Nash Q-learning for general-sum stochastic games}.
\newblock \emph{Journal of machine learning research}, 4\penalty0
  (Nov):\penalty0 1039--1069, 2003.

\bibitem[Jaksch et~al.(2010)Jaksch, Ortner, and Auer]{jaksch2010near}
Thomas Jaksch, Ronald Ortner, and Peter Auer.
\newblock Near-optimal regret bounds for reinforcement learning.
\newblock \emph{Journal of Machine Learning Research}, 11\penalty0
  (Apr):\penalty0 1563--1600, 2010.

\bibitem[Jia et~al.(2019)Jia, Yang, and Wang]{jia2019feature}
Zeyu Jia, Lin~F Yang, and Mengdi Wang.
\newblock {Feature-based Q-learning for two-player stochastic games}.
\newblock \emph{arXiv preprint arXiv:1906.00423}, 2019.

\bibitem[Jin et~al.(2018)Jin, Allen-Zhu, Bubeck, and Jordan]{jin2018q}
Chi Jin, Zeyuan Allen-Zhu, Sebastien Bubeck, and Michael~I Jordan.
\newblock {Is Q-learning provably efficient?}
\newblock In \emph{Advances in Neural Information Processing Systems}, pages
  4863--4873, 2018.

\bibitem[Jin et~al.(2019)Jin, Jin, Luo, Sra, and Yu]{jin2019learning}
Chi Jin, Tiancheng Jin, Haipeng Luo, Suvrit Sra, and Tiancheng Yu.
\newblock Learning adversarial mdps with bandit feedback and unknown
  transition.
\newblock \emph{arXiv}, pages arXiv--1912, 2019.

\bibitem[Lattimore and Szepesv{\'a}ri(2020)]{lattimore2020bandit}
Tor Lattimore and Csaba Szepesv{\'a}ri.
\newblock \emph{Bandit algorithms}.
\newblock Cambridge University Press, 2020.

\bibitem[Littman(1994)]{littman1994markov}
Michael~L Littman.
\newblock Markov games as a framework for multi-agent reinforcement learning.
\newblock In \emph{Machine learning proceedings 1994}, pages 157--163.
  Elsevier, 1994.

\bibitem[Littman(2001)]{littman2001friend}
Michael~L Littman.
\newblock {Friend-or-foe Q-learning in general-sum games}.
\newblock In \emph{ICML}, volume~1, pages 322--328, 2001.

\bibitem[Liu et~al.(2020)Liu, Yu, Bai, and Jin]{liu2020sharp}
Qinghua Liu, Tiancheng Yu, Yu~Bai, and Chi Jin.
\newblock A sharp analysis of model-based reinforcement learning with
  self-play.
\newblock \emph{arXiv preprint arXiv:2010.01604}, 2020.

\bibitem[Puterman(2014)]{puterman2014markov}
Martin~L Puterman.
\newblock \emph{Markov decision processes: discrete stochastic dynamic
  programming}.
\newblock John Wiley \& Sons, 2014.

\bibitem[Rosenberg and Mansour(2019)]{rosenberg2019online}
Aviv Rosenberg and Yishay Mansour.
\newblock {Online convex optimization in adversarial Markov decision
  processes}.
\newblock \emph{arXiv preprint arXiv:1905.07773}, 2019.

\bibitem[Shalev-Shwartz et~al.(2016)Shalev-Shwartz, Shammah, and
  Shashua]{shalev2016safe}
Shai Shalev-Shwartz, Shaked Shammah, and Amnon Shashua.
\newblock Safe, multi-agent, reinforcement learning for autonomous driving.
\newblock \emph{arXiv preprint arXiv:1610.03295}, 2016.

\bibitem[Shapley(1953)]{shapley1953stochastic}
Lloyd~S Shapley.
\newblock Stochastic games.
\newblock \emph{Proceedings of the national academy of sciences}, 39\penalty0
  (10):\penalty0 1095--1100, 1953.

\bibitem[Sidford et~al.(2020)Sidford, Wang, Yang, and Ye]{sidford2020solving}
Aaron Sidford, Mengdi Wang, Lin Yang, and Yinyu Ye.
\newblock Solving discounted stochastic two-player games with near-optimal time
  and sample complexity.
\newblock In \emph{International Conference on Artificial Intelligence and
  Statistics}, pages 2992--3002, 2020.

\bibitem[Silver et~al.(2016)Silver, Huang, Maddison, Guez, Sifre, Van
  Den~Driessche, Schrittwieser, Antonoglou, Panneershelvam, Lanctot,
  et~al.]{silver2016mastering}
David Silver, Aja Huang, Chris~J Maddison, Arthur Guez, Laurent Sifre, George
  Van Den~Driessche, Julian Schrittwieser, Ioannis Antonoglou, Veda
  Panneershelvam, Marc Lanctot, et~al.
\newblock {Mastering the game of Go with deep neural networks and tree search}.
\newblock \emph{nature}, 529\penalty0 (7587):\penalty0 484--489, 2016.

\bibitem[Silver et~al.(2017)Silver, Schrittwieser, Simonyan, Antonoglou, Huang,
  Guez, Hubert, Baker, Lai, Bolton, et~al.]{silver2017mastering}
David Silver, Julian Schrittwieser, Karen Simonyan, Ioannis Antonoglou, Aja
  Huang, Arthur Guez, Thomas Hubert, Lucas Baker, Matthew Lai, Adrian Bolton,
  et~al.
\newblock {Mastering the game of Go without human knowledge}.
\newblock \emph{nature}, 550\penalty0 (7676):\penalty0 354--359, 2017.

\bibitem[Vinyals et~al.(2019)Vinyals, Babuschkin, Czarnecki, Mathieu, Dudzik,
  Chung, Choi, Powell, Ewalds, Georgiev, et~al.]{vinyals2019grandmaster}
Oriol Vinyals, Igor Babuschkin, Wojciech~M Czarnecki, Micha{\"e}l Mathieu,
  Andrew Dudzik, Junyoung Chung, David~H Choi, Richard Powell, Timo Ewalds,
  Petko Georgiev, et~al.
\newblock {Grandmaster level in StarCraft II using multi-agent reinforcement
  learning}.
\newblock \emph{Nature}, 575\penalty0 (7782):\penalty0 350--354, 2019.

\bibitem[Wei et~al.(2017)Wei, Hong, and Lu]{wei2017online}
Chen-Yu Wei, Yi-Te Hong, and Chi-Jen Lu.
\newblock Online reinforcement learning in stochastic games.
\newblock In \emph{Advances in Neural Information Processing Systems}, pages
  4987--4997, 2017.

\bibitem[Xie et~al.(2020)Xie, Chen, Wang, and Yang]{xie2020learning}
Qiaomin Xie, Yudong Chen, Zhaoran Wang, and Zhuoran Yang.
\newblock {Learning Zero-Sum Simultaneous-Move Markov Games Using Function
  Approximation and Correlated Equilibrium}.
\newblock \emph{arXiv preprint arXiv:2002.07066}, 2020.

\bibitem[Yadkori et~al.(2013)Yadkori, Bartlett, Kanade, Seldin, and
  Szepesv{\'a}ri]{yadkori2013online}
Yasin~Abbasi Yadkori, Peter~L Bartlett, Varun Kanade, Yevgeny Seldin, and Csaba
  Szepesv{\'a}ri.
\newblock {Online learning in Markov decision processes with adversarially
  chosen transition probability distributions}.
\newblock In \emph{Advances in neural information processing systems}, pages
  2508--2516, 2013.

\bibitem[Zhang et~al.(2020{\natexlab{a}})Zhang, Kakade, Ba{\c{s}}ar, and
  Yang]{zhang2020model}
Kaiqing Zhang, Sham~M Kakade, Tamer Ba{\c{s}}ar, and Lin~F Yang.
\newblock Model-based multi-agent rl in zero-sum markov games with near-optimal
  sample complexity.
\newblock \emph{arXiv preprint arXiv:2007.07461}, 2020{\natexlab{a}}.

\bibitem[Zhang et~al.(2020{\natexlab{b}})Zhang, Zhou, and Ji]{zhang2020almost}
Zihan Zhang, Yuan Zhou, and Xiangyang Ji.
\newblock Almost optimal model-free reinforcement learning via
  reference-advantage decomposition.
\newblock \emph{arXiv preprint arXiv:2004.10019}, 2020{\natexlab{b}}.

\bibitem[Zimin and Neu(2013)]{zimin2013online}
Alexander Zimin and Gergely Neu.
\newblock {Online learning in episodic Markovian decision processes by relative
  entropy policy search}.
\newblock In \emph{Advances in neural information processing systems}, pages
  1583--1591, 2013.

\end{thebibliography}
\clearpage

\appendix

\section{Proof of the lower bound}
\label{appendix:proof-lower-bound-uncoupled}

The lower bound builds on the following lower bound for adversarial MDPs where both the transition and the reward function of each episode are chosen adversarially. Note that in our proof of Lemma~\ref{lem:lb-adv-mdp}, the optimal policies for $M_k$ are the same, so Lemma~\ref{lem:lb-adv-mdp} indeed implies a lower bound on the regret defined against the best stationary policy in hindsight.
\begin{lemma}[Lower bound for adversarial MDPs]
    \label{lem:lb-adv-mdp}
    For any horizon $H\ge 2$ and $K\ge 1$, there exists a family of MDPs $\cM$ with horizon $H$, state space $\{S_h\}_{h\le H}$ with $|S_h|\le 2$, action space $\set{A_h}_{h\le H}$ with $|A_h|\le 2$, and reward $r_h\in [0,1]$ such that the following is true: for any algorithm that deploys policy $\mu^k$ in episode $k$, we have
    \begin{equation*}
        \sup_{M_1,\cdots,M_K\in \cM} \sup_{\mu} \sum_{k=1}^K \left( V^\mu_{M_k}(s_0) - \E_{\mu_k} V^{\mu^k}_{M_k}(s_0) \right) \ge \Omega(\min\set{\sqrt{2^HK}, K}),
    \end{equation*}
    where $V^\ast_{M_k}$ refers to the optimal value function of MDP $M_k$.
\end{lemma}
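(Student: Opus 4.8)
I would prove Lemma~\ref{lem:lb-adv-mdp} by instantiating the combination-lock family $\{M_{X,Y}\}$ of Figure~\ref{fig:mdp-short} with a \emph{stochastic} terminal reward, showing that online learning over this family is exactly a stochastic multi-armed bandit (MAB) with $2^H$ arms, and then invoking the minimax MAB lower bound. Concretely, for $X,Y\in\{0,1\}^H$ let $M_{X,Y}$ have start state $s_{0,0}$, two states $s_{0,h},s_{1,h}$ per layer, two actions, and transitions from $s_{w,h}$ defined so that: while on the path ($w=y_h$) the correct action $x_h\oplus w$ moves to $s_{y_{h+1},h+1}$ and the other action moves to $s_{1-y_{h+1},h+1}$; while off the path ($w=1-y_h$) both actions move to $s_{1-y_{h+1},h+1}$, so a single mistake is unrecoverable. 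All intermediate rewards vanish, and the terminal reward is $\Ber(\tfrac12+\Delta)$ at the on-path state $s_{y_H,H}$ and $\Ber(\tfrac12)$ elsewhere, for a gap $\Delta\in(0,\tfrac12]$ to be tuned. Then $\pi_X(s_{w,h})=x_h\oplus w$ stays on the path for \emph{every} $Y$, so $V^{\pi_X}_{M_{X,Y}}(s_0)=\tfrac12+\Delta$, which is the maximal attainable value; thus $\pi_X$ is simultaneously optimal for all $M_{X,Y}$, and once $X$ is fixed the best fixed policy in hindsight scores exactly $(\tfrac12+\Delta)K$. Since the worst case dominates the average, it suffices to lower-bound the expected regret of any fixed (randomized) algorithm under the prior drawing $X\sim\mathrm{Unif}(\{0,1\}^H)$ once and $Y_1,\dots,Y_K\sim\mathrm{Unif}(\{0,1\}^H)$ i.i.d.

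\textbf{Reduction to a bandit.} The crux is that the observed trajectory is \emph{uninformative} about $X$. I would show by induction on $h$ that, conditioned on the history and on $X$, the next label satisfies $w_{h+1}=y_{h+1}\oplus\xi_h$, where $\xi_h$ is a bit determined by the history and $X$ but not by $y_{h+1}$; since $y_{h+1}$ is a fresh uniform bit independent of everything observed so far, $w_{h+1}$ is uniform on $\{0,1\}$ and independent of $X$. Hence $w_1,\dots,w_H$ are i.i.d.\ $\mathrm{Unif}(\{0,1\})$ and independent of $X$, so all information about $X$ flows through the terminal reward alone. Writing the per-step guess $\hat x_h:=a_h\oplus w_h$, the unrecoverable dynamics make the agent end on the path \emph{iff} it never erred while on it, i.e.\ iff $\hat X:=(\hat x_1,\dots,\hat x_H)=X$; the elevated reward is therefore $\Ber(\tfrac12+\Delta\,\I(\hat X=X))$. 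Each episode is thus exactly one pull of arm $\hat X_k\in\{0,1\}^H$ in a $2^H$-armed bandit whose unique good arm $X$ has gap $\Delta$, with the within-episode labels acting as pure noise. Combining with the benchmark gives
\begin{align*}
  \E\!\left[\Regret(K)\right] = \Bigl(\tfrac12+\Delta\Bigr)K-\nlsum_{k=1}^K\Bigl(\tfrac12+\Delta\,\Pr[\hat X_k=X]\Bigr) = \Delta\nlsum_{k=1}^K\Pr[\hat X_k\neq X],
\end{align*}
that is, $\Delta$ times the expected number of suboptimal pulls.

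\textbf{Finishing and main obstacle.} It then remains to invoke the standard minimax lower bound for stochastic MABs with $N=2^H$ arms: tuning $\Delta=\Theta(\min\{\sqrt{N/K},1\})$ and running a Le Cam / Pinsker KL argument (bounding how fast any algorithm can locate a uniformly random good arm) forces $\Delta\sum_{k}\Pr[\hat X_k\neq X]=\Omega(\min\{\sqrt{2^HK},K\})$; a realization of $(X,Y_{1:K})$ attaining the prior-averaged bound yields the worst-case statement, and since $\pi_X$ is optimal for every $M_k$ the bound also holds against the best \emph{stationary} policy. The main obstacle is the uninformativeness step: one must argue rigorously that \emph{arbitrary history-dependent (general) policies}—choosing $a_h$ adaptively from the observed labels and from earlier episodes' rewards—extract no information about $X$ from the transitions, so that the MDP interaction genuinely collapses to a bandit; the fresh-randomness induction above is what makes this go through, and care is needed to keep the terminal-reward event exactly $\{\hat X_k=X\}$ in spite of the off-path dynamics. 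A secondary point is the $K<2^H$ regime, where one fixes $\Delta$ to a constant and uses that fewer than $N$ pulls cannot cover the $N$ candidate arms, giving the linear $\Omega(K)$ branch.
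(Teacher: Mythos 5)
Your proposal is correct and follows essentially the same route as the paper's own proof: the same combination-lock family $M_{X,Y}$ with fixed $X$ and i.i.d.\ uniform $Y_k$, the same observation that the transition labels are uniform noise carrying no information about $X$, the same collapse of each episode to a single pull in a $2^H$-armed needle-in-a-haystack bandit, the same invocation of the classical bandit lower bound, and the same remark that $\pi_X$ is optimal for every $M_{X,Y_k}$ so the bound holds against the best fixed policy in hindsight. The only differences are cosmetic (your terminal rewards are $\Ber(\tfrac12+\Delta)$ vs.\ $\Ber(\tfrac12)$ rather than $\Ber(\tfrac12\pm\eps)$, your first transition is action-controlled rather than a coin flip, and you phrase the reduction as a direct regret identity rather than via an explicit simulation algorithm), and your fresh-randomness induction for the uninformativeness step is, if anything, spelled out more carefully than in the paper.
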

\begin{proof}
    Our construction is inspired by the ``combination lock'' MDP~\citep{du2019provably}. Let us redefine the horizon length as $H+1$ (so that $H\ge 1$) and let $h$ start from 0. We now define our family of MDPs.
  
    \begin{definition}[MDP $M_{X,Y,\eps}$]
        For any pair of bit strings $X=(x_1, \dots, x_H)\in\set{0,1}^H$, $Y=(y_1,\dots,y_H)\in\set{0,1}^H$ and any $\eps\in(0,1)$, the MDP $M_{X, Y, \eps}$ is defined as follows.
        \begin{enumerate}
            \item The state space is $S_0=\set{s_{0}}$ and $S_h=\set{s_{0,h}, s_{1,h}}$ for all $1\le h\le H$. The MDP starts at $s_0$ deterministically and terminates at $s_{0,H}$ or $s_{1,H}$.
            \item The action space is $A_h=\set{0, 1}$ for all $0\le h\le H$.
            \item The transition is defined as follows:
            \begin{itemize}
                \item $s_{0}$ transitions to $s_{0,1}$ or $s_{1,1}$ with probability at least $1/2$ each, regardless of the action taken.
                \item For any $1\le h\le H-1$, $s_{y_h, h}$ transitions to $s_{y_{h+1}, h+1}$ deterministically if $a_h=x_h\oplus y_h$ (``correct state'' in combination lock), and transitions to $s_{1-y_{h+1}, h+1}$ deterministically if $a_h=1 - x_h\oplus y_h$.
                \item For any $1\le h\le H-1$, $s_{1-y_h, h}$ transitions to $s_{1-y_{h+1}, h+1}$ deterministically regardless of the action taken  (``wrong state'' in combination lock).
            \end{itemize}
            \item The reward is $r_h\equiv 0$ for all $0\le h\le H-1$. At step $H$, we have
            \begin{itemize}
                \item $r_H(s_{y_H, H}) \sim \Ber(1/2+\eps)$,
                \item $r_H(s_{1-y_H, H}) \sim \Ber(1/2 - \eps)$.
            \end{itemize}
        \end{enumerate}
    \end{definition}

    A visualization for the MDP specified by $X$, $Y$ and $\eps$ is shown in Figure~\ref{fig:mdp-long}.
	\begin{figure}[ht]
		\centering
		\includegraphics[width=0.8\textwidth]{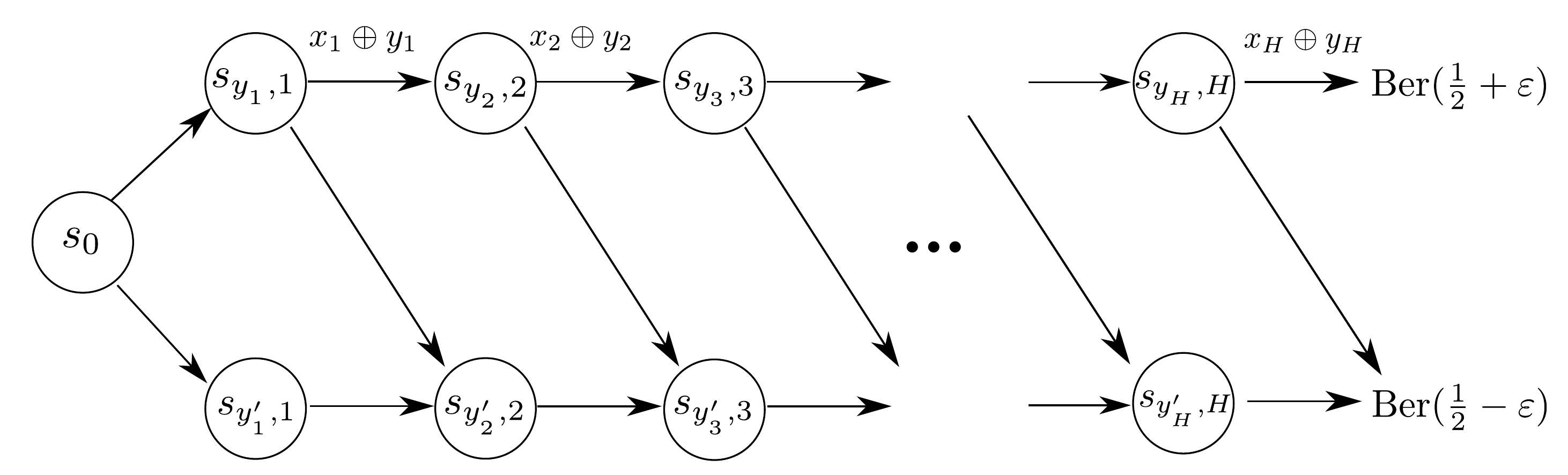}
		\caption{$M(X,Y)$: ``Combination lock'' MDP specified by $X$ and $Y$. For $y\in\{0,1\}$, $y'$ stands for $1-y$.}
		\label{fig:mdp-long}
	\end{figure}

    It is straightforward to see that the optimal value function of this MDP is $1/2(1/2 + \eps) + 1/2(1/2 - \eps)=1/2$, and the only way to achieve higher reward than $1/2-\eps$ is by following the path of ``good states'': $(s_0,s_{y_1,1},\cdots,s_{y_h,h},\cdots,s_{y_H,H})$. The corresponding optimal policy is $\pi^\ast(s_{w,h})=w\oplus x_h$, which is independent of $Y$.  

    \paragraph{Random sequence of MDPs is as hard as a $2^H$-armed bandit.}
    We now consider any fixed (but unknown) $X\in\set{0,1}^H$ and draw $K$ independent samples $Y_k\sim {\sf Unif}(\set{0,1}^H)$ for $1\le k\le K$. We argue that if we provide $M_k\defeq M_{X, Y_k, \eps}$ in episode $k$ (with some appropriate choice of $\eps$), then the problem is as hard as a $2^H$-armed bandit problem with (minimum) suboptimality gap $\eps$, and thus must have the desired regret lower bound.

    Our first claim is that, on average over $Y_k$, the trajectory seen by the algorithm is equivalent (equal in distribution) to the following ``completely random'' MDP: each state $s_{\set{0,1}, h}$ transitions to $s_{\set{0,1}, h+1}$ with probability at least $1/2$ regardless of the actions taken; and the reward is $r_H\sim \Ber(1/2)$ if $A=X\oplus Y$ and $r_H\sim \Ber(1/2-\eps)$ if $A\neq X\oplus Y$, where $A=\set{a_1,\dots,a_h}$ are the actions taken in steps 1 through $H$. Indeed, consider the transition starting from $s_{y_h,h}$. Since $y_{h+1}\sim\Ber(1/2)$, the transition probability to $s_{0,h+1}$ and $s_{1,h+1}$ must be $1/2$ each, regardless of the action taken. The claim about the reward follows from the definition of the MDP.

    We now construct a bandit instance, and show that solving this bandit problem can be reduced to online learning in the sequence of MDPs above. The bandit instance has $2^H$ arms indexed by $\set{0,1}^H$. The arm indexed by $X$ gives reward $\Ber(1/2)$, and otherwise the reward is $\Ber(1/2-\eps)$. Now, for any algorithm solving the adversarial MDP problem, consider the following induced algorithm for the bandit problem.

    \begin{algorithm}[h]
        \caption{Reducing bandits to adversarial MDPs}
        \label{algorithm:bandit-to-adversarial-mdp}
        \begin{algorithmic}[1]
            \For{$k=1, \dots, K$}
                \State Sample $Y\sim {\sf Unif}(\set{0,1}^H)$.
                \State Simulate the adversarial MDP algorithm by showing the trajectory $(s_0,s_{y_1,1},\dots,s_{y_H, H})$.
                \State Denote the action sequence by $A=(a_1,\dots,a_H)$.
                \State Play $A\oplus Y$ in the bandit environment.
                \State Show the received bandit reward to the adversarial MDP algorithm as the last step reward. 
            \EndFor
        \end{algorithmic}
    \end{algorithm}
    We now argue that the interaction seen by the adversarial MDP algorithm is identical in distribution to the sequence $M_{X, Y_k,\eps}$. The trajectory is drawn from a uniform distribution, which is the same as that generated by $M_{X, Y_k,\eps}$. The reward is high, i.e. $\Ber(1/2)$, if and only if $A\oplus Y=X$, which is equivalent to $A=X\oplus Y$. 
    This is also the case in the adversarial MDP problem, since playing the action sequence $X\oplus Y$ corresponds to playing the optimal policy $\pi^*(s_{y_h,h})=x_h\oplus y_h$.

    Therefore, the regret achieved by the induced algorithm in the bandit environment would be equal (in distribution) to the regret achieved by this algorithm in the adversarial MDP environment. Applying classical lower bounds on stochastic bandits~\citep[Chapter 15]{lattimore2020bandit} (which corresponds to taking $\eps=\eps_{H,K}\defeq \min\set{\sqrt{2^H/K}, 1/4}$), we obtain
    \begin{equation*}
    \sup_{X\in \set{0,1}^H} \E_{Y_1,\dots,Y_k\sim {\sf Unif}(\set{0,1}^H)} \brac{ \sum_{k=1}^K \left(V^\ast_{M_{X, Y_k,\eps_{H,K}}}(s_0)  - \E_{\mu^k} V^{\mu^k}_{M_{X, Y_k, \eps_{H,K}}}(s_0)\right) } \ge \Omega(\min\set{\sqrt{2^HK}, K}),
    \end{equation*}
    where $\E_{\mu^k}$ denotes the randomness in the algorithm execution (which includes the randomness of the realized transitions and rewards that were used by the algorithm to determine $\mu^k$). Note that for the MDP $M_{X,Y_k,\epsilon_H,T}$, the optimal policy is dictated by $X$ and independent of $Y_k$ (hence independent of $k$). Thus, the previous lower bound can rewritten as a comparison with the best policy in hindsight:
    \begin{equation*}
        \sup_{X\in \set{0,1}^H}\sup_\mu \E_{Y_1,\dots,Y_k\sim {\sf Unif}(\set{0,1}^H)} \brac{ \sum_{k=1}^K \left(V^\mu_{M_{X, Y_k,\eps_{H,K}}}(s_0)  - \E_{\mu^k} V^{\mu^k}_{M_{X, Y_k, \eps_{H,K}}}(s_0)\right) } \ge \Omega(\min\set{\sqrt{2^HK}, K}).
    \end{equation*}
 
    \paragraph{The adversarial MDP problem is as hard as the above random sequence of MDPs.}
    Define $\cM \defeq \set{M_{X, Y, \eps_{H,K}}:X, Y \in \set{0,1}^H}$. As the minimax regret is lower bounded by the average regret over any prior distribution of MDPs, the above lower bound implies the following minimax lower bound
    \begin{equation*}
        \sup_{M_k\in\cM} \sup_\mu\brac{ \sum_{k=1}^K \left(V^\mu_{M_k}(s_0)  - \E_{\mu^k} V^{\mu^k}_{M_k}(s_0)\right) } \ge \Omega(\min\set{\sqrt{2^HK}, K})
    \end{equation*}
    for any adversarial MDP algorithm.
\end{proof}

\begin{proof}[Proof of Theorem 1]
    With Lemma~\ref{lem:lb-adv-mdp} in hand, we are in a position to prove the main theorem.

    Our proof follows by defining a two-player Markov game and a set of min-player policies $\set{\nu^k}$ such that the transitions and rewards seen by the max-player are exactly equivalent to the MDP $M_{X, Y_k, \eps_{H,K}}$ constructed in Lemma~\ref{lem:lb-adv-mdp}. Indeed, we augment the MDP $M_{X, Y_k, \eps_{H, K}}$ with a set of min-player actions $\cB_h=\set{1,2,3,4}$, and redefine the transition such that from any $s_{i,h}$ where $i\in\set{0,1}$ and $1\le h\le H-1$, the Markov game transitions according to Table~\ref{tab:trans}.
    \begin{table}[h]
        \center
        \begin{tabular}{l|l|l|l|l}
            $a$/$b$ & $1$           & $2$             & $3$             & $4$             \\ \hline
            $0$   & $s_{i,h+1}$ & $s_{1-i,h+1}$ & $s_{i,h+1}$   & $s_{1-i,h+1}$ \\ \hline
            $1$   & $s_{i,h+1}$ & $s_{1-i,h+1}$ & $s_{1-i,h+1}$ & $s_{i,h+1}$  
        \end{tabular}
        \caption{transition function of the state $s_{i,h}$ for the hard instance of Markov games.}
        \label{tab:trans}
    \end{table}

    Such an action set $\cB_h$ is powerful enough to reproduce all the possible transitions in the original single-player MDP. We then define $\nu^k$ as the policy such that the transition follows exactly $M_{X, Y_k}$. The reward function is determined only by states and thus remains the same. 
    Therefore, Lemma~\ref{lem:lb-adv-mdp} implies the following one-sided regret bound for the max-player: 
    \begin{equation*}
        \sup_{\nu^k} \sup_{\mu} \sum_{k=1}^K \left(V^{\mu, \nu^k}(s_0)  - \E_{\mu^k} V^{\mu^k, \nu^k}(s_0) \right) \ge \Omega(\min\set{\sqrt{2^HK}, K}),
    \end{equation*}
    which is the desired result.
    \iffalse
    Therefore, Lemma~\ref{lem:lb-adv-mdp} now implies the following one-sided regret bound for the max-player:
    \begin{equation*}
    \sup_{\nu^k}  \brac{ \sum_{k=1}^K \left(\max_{\mu} V^{\mu, \nu^k}(s_0)  - \E_{\mu^k} V^{\mu^k, \nu^k}(s_0)\right) } \ge \Omega(\min\set{\sqrt{2^HK}, K}).
    \end{equation*}
    Finally, note that $\sum_{k=1}^K \max_{\mu} V^{\mu, \nu^k}(s_0) = \max_{\mu} \sum_{k=1}^K V^{\mu, \nu^k}(s_0)$; this is because for any $\nu^k$ the max over $\mu$ can be achieved at a single optimal policy $\mu^\ast$ regardless of $\nu^k$:
    \begin{equation*}
    \mu^\ast_h(s_{y, h}) = x_h \oplus y,~~~\textrm{for~all}~y\in \set{0,1},~h\in [H].
    \end{equation*}
    Therefore, the above bound also implies
    \begin{equation*}
    \sup_{\nu^k}  \brac{ \max_{\mu} \sum_{k=1}^K \left(V^{\mu, \nu^k}(s_0)  - \E_{\mu^k} V^{\mu^k, \nu^k}(s_0) \right)} \ge \Omega(\min\set{\sqrt{2^HK}, K}),
    \end{equation*}
    which is the desired result.
    \fi
\end{proof}

\section{Proof for the \nashvol{} algorithm}

Throughout this section, let $\iota = \log (\nicefrac{HSAK}{p})$. The following lemma summarizes the key properties of the choice of the learning rate $\alpha_t$, which are used in the proof below.

\begin{lemma}[{\citep[Lemma 4.1]{jin2018q}}]  \label{lem:ppt-alpha}
    The following properties hold for $\alpha_{t}^{i}$.
    \begin{enumerate}
        \item $\nicefrac{1}{\sqrt{t}} \le \sum_{i=1}^{t} \nicefrac{\alpha_t^i}{\sqrt{i}} \le \nicefrac{2}{\sqrt{t}}$ for all $t \ge 1$. \label{ppt:1}
        \item $\sum_{i=1}^{t} (\alpha_t^i)^2 \le \max_{i\in [t]} \alpha_{t}^{i} \le \nicefrac{2 GH}{t}$ for all $t\ge 1$. \label{ppt:2}
        \item $\sum_{t=i}^{\infty} \alpha_t^i = 1 + \nicefrac{1}{GH}$ for all $i \ge 1$. \label{ppt:3}
    \end{enumerate}    
\end{lemma}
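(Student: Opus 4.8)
The plan is to reduce everything to the explicit product form of the weights and then handle the three properties separately; the whole statement is the analogue of \citep[Lemma 4.1]{jin2018q} with the effective horizon $H$ replaced by $m := GH \ge 1$, so I would first record the closed form that makes this substitution transparent. Since $1-\alpha_j = (j-1)/(m+j)$ and $\alpha_i = (m+1)/(m+i)$, expanding the telescoping product gives, for $1 \le i \le t$,
\begin{align*}
    \alpha_t^i = \frac{m+1}{m+i}\prod_{j=i+1}^{t}\frac{j-1}{m+j} = (m+1)\,\frac{\Gamma(m+i)}{\Gamma(i)}\cdot\frac{\Gamma(t)}{\Gamma(m+t+1)}.
\end{align*}
From this I would extract the two structural facts used throughout: the one-step recursion $\alpha_t^i = (1-\alpha_t)\,\alpha_{t-1}^i$ for $i < t$ with $\alpha_t^t = \alpha_t$, and the ratio identity $\alpha_t^{i+1}/\alpha_t^i = (m+i)/i > 1$, which shows the weights are strictly increasing in $i$ for fixed $t$.

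For the third property (the column sum) I would use a Gamma-function telescoping. Writing $f(t) := \Gamma(t)/\Gamma(m+t+1)$, the identity $\Gamma(t)/\Gamma(m+t) - \Gamma(t+1)/\Gamma(m+t+1) = m\,f(t)$ collapses the tail sum to $\sum_{t=i}^{\infty} f(t) = \tfrac{1}{m}\,\Gamma(i)/\Gamma(m+i)$. Substituting the closed form above then yields $\sum_{t=i}^{\infty} \alpha_t^i = (m+1)/m = 1 + 1/(GH)$ exactly, as claimed. The second property follows quickly afterwards: the normalization $\sum_{i=1}^{t}\alpha_t^i = 1$ noted in the main text gives $\sum_{i=1}^{t}(\alpha_t^i)^2 \le (\max_i \alpha_t^i)\sum_i\alpha_t^i = \max_i\alpha_t^i$, and by the monotonicity established above the maximum is attained at $i=t$, where $\alpha_t^t = (m+1)/(m+t) \le (m+1)/t \le 2m/t$ using $m \ge 1$.

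The first property I would prove by induction on $t$ using the recursion, setting $b_t := \sum_{i=1}^{t} \alpha_t^i/\sqrt{i}$, so that $b_t = \alpha_t/\sqrt{t} + (1-\alpha_t)\,b_{t-1}$ with $b_1 = 1$. For the upper bound, assuming $b_{t-1}\le 2/\sqrt{t-1}$ reduces the claim $b_t \le 2/\sqrt{t}$ to the inequality $2\sqrt{t(t-1)} \le m + 2t - 1$, which holds because $2\sqrt{t(t-1)}\le 2t-1$ by AM–GM and $m\ge 0$; the lower bound is even simpler, as $b_{t-1}\ge 1/\sqrt{t-1}\ge 1/\sqrt{t}$ together with $1-\alpha_t \ge 0$ immediately gives $b_t \ge 1/\sqrt{t}$.

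The only real subtlety is obtaining the \emph{exact} constant $1 + 1/(GH)$ in the third property; once the Gamma-telescoping identity is set up the remaining estimates are routine algebra, and the substitution $H \mapsto GH$ makes clear that nothing beyond $G \ge 1$ (hence $m \ge 1$) is needed.
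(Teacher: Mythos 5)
Your proof is correct, but note that the paper never actually proves this lemma: it is stated purely as a citation of \citep[Lemma 4.1]{jin2018q}, with the learning rate's $H$ silently replaced by $GH$ and the constants adjusted accordingly. Your write-up is therefore a self-contained verification of something the paper takes on faith, and it is a useful one. For Properties 1 and 2 your route coincides with the standard (Jin et al.) argument: the one-step recursion $\alpha_t^i=(1-\alpha_t)\alpha_{t-1}^i$ plus induction on $t$ for the $1/\sqrt{t}$ bounds, and normalization plus monotonicity for the squared sum. For Property 3, however, your derivation is genuinely different: instead of a recursive/inductive argument you extract the exact closed form $\alpha_t^i=(m+1)\,\frac{\Gamma(m+i)}{\Gamma(i)}\cdot\frac{\Gamma(t)}{\Gamma(m+t+1)}$ with $m=GH$ and collapse the infinite sum by the telescoping identity $\frac{\Gamma(t)}{\Gamma(m+t)}-\frac{\Gamma(t+1)}{\Gamma(m+t+1)}=m\,\frac{\Gamma(t)}{\Gamma(m+t+1)}$, which yields the constant $1+\nicefrac{1}{GH}$ exactly and works for any real $m>0$. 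This buys two things the paper's citation obscures: (i) it isolates precisely where $G\ge 1$ is needed --- only in Property 2, where $\max_i\alpha_t^i=\frac{m+1}{m+t}\le\frac{2m}{t}$ requires $m\ge 1$, while Properties 1 and 3 need only $m\ge 0$ and $m>0$ respectively (your reduction of Property 1 to $2\sqrt{t(t-1)}\le m+2t-1$ makes this transparent); (ii) it confirms the constants really do transform as claimed under $H\mapsto GH$. Two small points you should add for completeness: the telescoping requires observing that the tail $\Gamma(T+1)/\Gamma(m+T+1)\to 0$ as $T\to\infty$ (it is at most $1/(T+1)$ once $m\ge 1$), and the normalization $\sum_{i=1}^t\alpha_t^i=1$ that you import from the main text holds because $\alpha_1=1$ forces $\alpha_t^0=0$; neither is a gap in substance.
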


\subsection{Upper confidence bound on the minimax value function}

\begin{lemma}[V-learning lemma]  \label{lem:v-est-err} 
    In Algorithm~\ref{alg:nash-v}, let $t = N_{h}^{k}(s)$ and suppose state $s\in \cS_h$ was previously visited at episodes $k^1,\ldots, k^t < k$ at the $h$th step. For any $p\in (0, 1)$, let $\iota = \log (\nicefrac{HSAK}{p})$. Choose $\eta_t = \sqrt{\nicefrac{GH\log A}{At}}$. Then with probability at least $1-p$, for any $t\in [K]$, $h\in [H]$ and $s\in \cS_h$, there exists a constant $c$ such that 
    \begin{align} \label{eqn:v-est-err}
        \max_{\mu\in \Delta_{\cA}} \sum_{i=1}^{t} \alpha_{t}^{i} \D_{\mu, \nu_{h}^{k^i}} \left[ r_h + \Trans_h \uV_{h+1}^{k^i} \right] (s) 
        - \sum_{i=1}^{t} \alpha_{t}^{i} \left( r_h (s, a_{h}^{k^i}, b_{h}^{k^i} ) + \uV_{h+1}^{k^i} (s_{h+1}^{k^i}) \right)
        \le c \sqrt{GH^3 A\iota/t}.
    \end{align}
\end{lemma}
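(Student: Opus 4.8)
The plan is to fix a triple $(h,s,t)$ and view the inner loop of Algorithm~\ref{alg:nash-v} at state $s$, step $h$ as an adversarial bandit played over the $t$ visits $k^1,\dots,k^t$. At the $i$th visit our player draws $a_h^{k^i}\sim\mu_h^{k^i}(\cdot\vert s)$, the opponent draws $b_h^{k^i}\sim\nu_h^{k^i}(\cdot\vert s)$, and the scalar $r_h(s,a_h^{k^i},b_h^{k^i})+\uV_{h+1}^{k^i}(s_{h+1}^{k^i})$ is revealed, from which the algorithm forms the importance-weighted loss estimate $\ul_h^i(a)=(H-r_h-\uV_{h+1}^{k^i}(s_{h+1}^{k^i}))\,\I(a_h^{k^i}=a)/(\mu_h^{k^i}(a\vert s)+\eta_i)$. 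Writing $\bar\ell^i(a):=H-\D_{a,\nu_h^{k^i}}[r_h+\Trans_h\uV_{h+1}^{k^i}](s)$ for the conditional-mean loss of action $a$ (with the max-action fixed to $a$) and $\hat\ell^i:=H-r_h(s,a_h^{k^i},b_h^{k^i})-\uV_{h+1}^{k^i}(s_{h+1}^{k^i})$ for the realized loss of the played arm, I would first use the normalization $\sum_{i=1}^t\alpha_t^i=1$ (Lemma~\ref{lem:ppt-alpha}) and the linearity in $\mu$ of the objective to rewrite the left-hand side of \eqref{eqn:v-est-err} as the weighted regret $\sum_{i=1}^t\alpha_t^i\hat\ell^i-\min_{a\in\cA_h}\sum_{i=1}^t\alpha_t^i\bar\ell^i(a)$, the $\max_{\mu\in\Delta_\cA}$ having collapsed to a best fixed arm. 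Since the policy update $\mu_h^{k}(\cdot\vert s)\propto\exp\{-\eta_t\uL_h(s,\cdot)/\alpha_t\}$ with $\uL_h=\sum_i\alpha_t^i\ul_h^i$ is a weighted exponential-weights scheme with \emph{implicit exploration} (the additive $\eta_i$ in the denominator), this is exactly the regret of a weighted EXP3-IX procedure.

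\textbf{Decomposition and which term dominates.} Fixing the filtration $\{\cF_i\}$ generated by everything up to the start of the $i$th visit (so $\mu_h^{k^i}$, $\nu_h^{k^i}$, hence $\bar\ell^i$, are $\cF_i$-measurable while $a_h^{k^i},b_h^{k^i},s_{h+1}^{k^i}$ are fresh randomness), I would split the weighted regret into four pieces: (I) a \emph{cost-of-exploration} bias on the played-policy side, $\sum_i\alpha_t^i\langle\mu_h^{k^i},\bar\ell^i-\E[\ul_h^i\mid\cF_i]\rangle$; (II) the high-probability EXP3-IX control of the comparator's estimate, $\sum_i\alpha_t^i(\bar\ell^i(a^\ast)-\ul_h^i(a^\ast))$; (III) a weighted martingale term $\sum_i\alpha_t^i(\hat\ell^i-\langle\mu_h^{k^i},\bar\ell^i\rangle)$; and (IV) the core weighted-Hedge regret $\sum_i\alpha_t^i\langle\mu_h^{k^i}-e_{a^\ast},\ul_h^i\rangle$ of the exponential-weights update. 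Term (I) is the leading one: the importance-sampling identity $\E[\ul_h^i(a)\mid\cF_i]=\bar\ell^i(a)\mu_h^{k^i}(a)/(\mu_h^{k^i}(a)+\eta_i)$ gives $\langle\mu_h^{k^i},\bar\ell^i-\E[\ul_h^i\mid\cF_i]\rangle\le\eta_i\sum_a\bar\ell^i(a)\le\eta_i HA$, so with $\eta_i=\sqrt{GH\log A/(Ai)}$ and property~\ref{ppt:1} of Lemma~\ref{lem:ppt-alpha} ($\sum_i\alpha_t^i/\sqrt i\le 2/\sqrt t$) this contributes $\lesssim HA\sqrt{GH\log A/A}\cdot t^{-1/2}=\tilde\cO(\sqrt{GH^3A/t})$, exactly the target order and the order of the bonus $\beta_t$. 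Term (III) is handled by a weighted Azuma/Freedman inequality whose variance proxy $\sum_i(\alpha_t^i)^2\le 2GH/t$ comes from property~\ref{ppt:2}, contributing $\lesssim\sqrt{GH^3\iota/t}$; term (II), controlled in high probability by the implicit-exploration (Bernstein-type) argument, is the source of the $\iota$ factor and is of lower order $\tilde\cO(\sqrt{GHA\iota/t})$.

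\textbf{Core regret and the main obstacle.} The heart of the argument is term (IV), the regret of the weighted exponential-weights update against its own loss estimates. Because this update has \emph{simultaneously} a time-varying inverse temperature $\eta_t/\alpha_t$ and time-varying weights $\alpha_t^i$ (the latter inducing a multiplicative discounting $(1-\alpha_i)$ between consecutive rounds), I cannot invoke a textbook EXP3 bound; instead I would reproduce the weighted potential (mirror-descent) telescoping of \citep{bai2020near} adapted to our aggressive learning rate, obtaining a regularization contribution of order $\alpha_t\log A/\eta_t\lesssim\sqrt{GHA\log A/t}$ and a stability contribution in terms of $\sum_i\alpha_t^i\eta_i\langle\mu_h^{k^i},(\ul_h^i)^2\rangle$, and then show—using properties~\ref{ppt:1}--\ref{ppt:3}—that this balances to at most the order of term (I) rather than a larger power of $H$. \textbf{This is precisely where I expect the difficulty:} controlling the stability of the heavy-tailed importance-weighted estimates under the changing step and weights tightly enough that term (IV) does not dominate term (I); this tightness is exactly why $\eta_t$ is inflated by the factor $\sqrt{GH}$ relative to \nashvsp{}, and it is what pins down the matching bonus $\beta_t=c\sqrt{GH^3A\iota/t}$. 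A secondary subtlety is that each loss depends on the random upper-confidence value $\uV_{h+1}^{k^i}$, so the filtration must be arranged so that $\uV_{h+1}^{k^i}$ is measurable at the moment its loss is incurred, keeping the martingale structure of (II) and (III) valid. Finally, all four estimates hold in high probability for a fixed $(h,s,t)$, so I would close with a union bound over $h\in[H]$, $s\in\cS_h$, $t\in[K]$, which upgrades the per-triple bounds to the uniform statement \eqref{eqn:v-est-err} and accounts for the $\log(HSAK/p)=\iota$ factor.
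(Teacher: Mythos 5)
Your reduction and your decomposition track the paper's proof closely at the top level: the paper likewise peels off your term (III) first, by Azuma--Hoeffding with the variance proxy $\sum_{i=1}^t(\alpha_t^i)^2\le 2GH/t$ from Lemma~\ref{lem:ppt-alpha} (contributing $2\sqrt{GH^3\iota/t}$), reduces what remains to the weighted adversarial-bandit regret
\begin{align*}
R_t^{\ast} \;=\; \max_{\mu\in \Delta_{\cA}}\sum_{i=1}^t \alpha_t^i\, \D_{\mu,\nu_h^{k^i}}\bigl[r_h+\Trans_h \uV_{h+1}^{k^i}\bigr](s)\;-\;\sum_{i=1}^t \alpha_t^i\, \D_{\mu_h^{k^i},\nu_h^{k^i}}\bigl[r_h+\Trans_h \uV_{h+1}^{k^i}\bigr](s),
\end{align*}
and closes with the same union bound over $(t,s,h)$. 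Your bounds on terms (I)--(III) are correct and match the orders of the corresponding terms in the paper. The difference is what happens to $R_t^\ast$ (your (I), (II), (IV)): the paper does not re-derive any of the EXP3-IX machinery. It invokes \citep[Lemma~17]{bai2020near} as a black box with weights $w_i=\alpha_t^i$ and implicit-exploration parameter $\gamma_t=\eta_t$, checks that lemma's hypotheses ($\eta_i\le 2\gamma_i$ and the monotonicity condition), and then the only new work is hyperparameter arithmetic via Lemma~\ref{lem:ppt-alpha}: the resulting terms $3H\alpha_t^t\log A/\eta_t$, $\tfrac{3A}{2}\sum_i \eta_i\alpha_t^i$, and $\sqrt{2\iota\sum_i(\alpha_t^i)^2}$ are each $O(\sqrt{GH^3A\iota/t})$ under \eqref{eqn:alpha-beta-eta}. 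In particular, the ``adaptation to our aggressive learning rate'' you anticipate is unnecessary: that lemma is already stated for arbitrary weights and time-varying parameters, so the new $\alpha_t,\eta_t$ enter only through this arithmetic.

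The genuine gap in your proposal is term (IV), which you leave as a plan rather than a proof---and the plan as sketched does not close. If you control your stability quantity $\sum_i \alpha_t^i\eta_i\,\langle \mu_h^{k^i},(\ul_h^{i})^2\rangle$ by the natural importance-sampling second-moment bound $\E[\langle\mu_h^{k^i},(\ul_h^{i})^2\rangle \mid \cF_i]\le AH^2$ (the losses here live in $[0,H]$, not $[0,1]$), then Property~\ref{ppt:1} of Lemma~\ref{lem:ppt-alpha} gives
\begin{align*}
AH^2\sum_{i=1}^t \alpha_t^i \eta_i \;=\; AH^2\sqrt{\tfrac{GH\log A}{A}}\sum_{i=1}^t\tfrac{\alpha_t^i}{\sqrt{i}} \;=\;\Theta\Bigl(\sqrt{\tfrac{GH^5A\log A}{t}}\Bigr),
\end{align*}
which overshoots the target $c\sqrt{GH^3A\iota/t}$ by a factor of $H$. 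Removing that factor requires the finer bookkeeping that lives inside \citep[Lemma~17]{bai2020near}---normalizing the losses by $H$, using the IX denominator to trade $(\ul^{i})^2$ for $\ul^{i}$, and exploiting $\eta_i\le 2\gamma_i$---i.e., exactly the content of the lemma the paper cites instead of reproving. So your assertion that (IV) ``balances to at most the order of term (I)'' is the unproven crux, not a routine verification; the clean fix is to import \citep[Lemma~17]{bai2020near} wholesale as the paper does (at which point your four-way decomposition becomes redundant), or else to reproduce its proof in full including this refinement.
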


\begin{proof}
    By the Azuma-Hoeffding inequality and Lemma~\ref{lem:ppt-alpha},
    \begin{align*}
        \sum_{i=1}^t{\alpha _{t}^{i}\D_{\mu _{h}^{k^i} \times \nu _{h}^{k^i}} \left( r_h +\Trans_hV_{h+1}^{k^i} \right)\left( s \right) } - \sum_{i=1}^t{\alpha _{t}^{i}\left[ r_h\left( s,a_{h}^{k^i},b_{h}^{k^i} \right) +V_{h+1}^{k^i}\left( s_{h+1}^{k^i} \right) \right]}\le 2 \sqrt {GH^3\iota /t}.    
    \end{align*}
    So we only need to bound 
    \begin{align}\label{eq:regret_definition}
        R_t^{\ast} := \underset{\mu\in \Delta_{\cA}}{\max}\sum_{i=1}^t{\alpha _{t}^{i}\D_{\mu \times \nu _{h}^{k^i}} \left( r_h +\Trans_hV_{h+1}^{k^i} \right)\left( s \right) } - \sum_{i=1}^t{\alpha _{t}^{i}\D_{\mu _{h}^{k^i} \times \nu _{h}^{k^i}} \left( r_h +\Trans_hV_{h+1}^{k^i} \right)\left( s \right) }. 
    \end{align}
    By taking $w_i = \alpha_t^i$ in~\citep[Lemma 17]{bai2020near},
    \begin{align*}
        R_{t}^{\ast}\le & \frac{3H\alpha_{t}^{t}\log A}{\eta _t}+\frac{3A}{2}\nlsum_{i=1}^t{\eta_i}\alpha _{t}^{i}+\sqrt{2\iota \nlsum_{i=1}^t{\left( \alpha _{t}^{i} \right) ^2}} \\
        \overset{(i)}{\le} & 3H\alpha_{t}^{t}\sqrt{\tfrac{At\log A}{GH}} + \frac{3}{2}\sqrt{GH A\log A}\nlsum_{i=1}^t{\frac{\alpha_{t}^{i}}{\sqrt{i}}} + \sqrt{2\iota \nlsum_{i=1}^t{\left( \alpha_{t}^{i} \right) ^2}} \\
        \overset{(ii)}{\le} & 3H\tfrac{GH+1}{GH+t}\sqrt{\tfrac{At\log A}{GH}} + 3\sqrt{\tfrac{GH A \log A}{t}}+2\sqrt{\tfrac{GH\iota}{t}} \\
        \le & c\sqrt{\tfrac{GHA\iota}{t}}
    \end{align*}
    for some constant $c$, where $(i)$ is by setting $\eta_t = \sqrt{\frac{GH\log A}{At}}$ and $(ii)$ is by Lemma~\ref{lem:ppt-alpha}. Taking union bound w.r.t. all $(t,s,h) \in [K] \times \cS \times [H]$ concludes the proof.

    We comment that the quantity $R_{t}^{*}$ is actually $H$ times the LHS in the inequality of~\citep[Lemma 17]{bai2020near}. See Appendix F and Algorithm 9 in \cite{bai2020near} for a detailed reduction from MG to adversarial bandit problem. Furthermore, in~\citep{bai2020near} there are actually two parameters $\eta_t $ and $\gamma_t$. Here we just take $\gamma_t= \eta_t$ for simplicity. Finally, the proof of~\citep[Lemma 17]{bai2020near} requires that $\eta_i \le 2\gamma_i$ for all $i\le t$~\citep[Lemma 19]{bai2020near} and that $\gamma_t$ is nondecreasing in $t$~\citep[Lemma 21]{bai2020near}, which are both satisfied by our specification of $\eta_t$. 
\end{proof}

\begin{lemma}[Upper confidence bound]
    In Algorithm~\ref{alg:nash-v}, for any $p \in (0, 1)$, let $\iota = \log (\nicefrac{HSAK}{p})$ and choose $\beta_t = c\sqrt{GH^3A\iota / t}$ for some large constant $c$. Then with probability at least $1-p$, $\oV{h}(s) \le \uV_{h}^{k}(s)$ for all $k\in [K]$, $h\in [H]$ and $s\in \cS_h$.
\end{lemma}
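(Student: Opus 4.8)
The plan is to prove the claim by downward induction on $h$, from $h=H+1$ down to $h=1$, maintaining the inductive hypothesis that $\oV{h+1}(s')\le \uV_{h+1}^{k}(s')$ for every $k\in[K]$ and $s'\in\cS_{h+1}$. The base case $h=H+1$ is immediate, since both sides vanish by the convention $V_{H+1}\equiv 0$. Throughout I would condition on the probability-$(1-p)$ event of Lemma~\ref{lem:v-est-err}, so that the whole argument is deterministic given that event and no further union bound is needed; all probabilistic content is inherited from Lemma~\ref{lem:v-est-err}.

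For the inductive step I would start from the closed form for the iterate,
\[
    \uV_{h}^{k}(s) = \alpha_{t}^{0} H + \nlsum_{i=1}^{t} \alpha_{t}^{i}\bigl( r_{h}(s, a_{h}^{k^i}, b_{h}^{k^i}) + \uV_{h+1}^{k^i}(s_{h+1}^{k^i}) + \ubeta_{i}\bigr),
\]
where $t=N_h^k(s)$ and $k^1,\dots,k^t$ are the previous visits to $(s,h)$. If $t=0$ the iterate equals $H\ge \oV{h}(s)$ and we are done, so assume $t\ge 1$. The key structural step is the one-sided minimax inequality
\[
    \oV{h}(s) \le \max_{\mu\in\Delta_{\cA}} \nlsum_{i=1}^{t} \alpha_{t}^{i}\, \D_{\mu, \nu_{h}^{k^i}}[r_h + \Trans_h \oV{h+1}](s),
\]
which I would obtain by taking $\mu=\mu_h^{\ast}$, the Nash max-player policy at $(s,h)$: since $\mu_h^{\ast}$ is best-response-proof, $\D_{\mu_h^{\ast},\nu}[r_h+\Trans_h\oV{h+1}](s)\ge \oV{h}(s)$ for every $\nu$, and averaging against the weights $\{\alpha_t^i\}$, which sum to one, preserves the bound before one enlarges the left factor by $\max_\mu$.

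Next I would lift this from $\oV{h+1}$ to the iterates $\uV_{h+1}^{k^i}$: the inductive hypothesis gives $\uV_{h+1}^{k^i}\ge\oV{h+1}$ pointwise, and since $\Trans_h$ is a monotone averaging operator the right-hand side only increases, so the same bound holds with $\oV{h+1}$ replaced by $\uV_{h+1}^{k^i}$. I then invoke Lemma~\ref{lem:v-est-err} to trade this optimistic target for the realized samples at cost $c\sqrt{GH^3A\iota/t}$, and observe that the accumulated bonus cancels exactly this cost: by property~\ref{ppt:1} of Lemma~\ref{lem:ppt-alpha}, $\nlsum_{i=1}^{t}\alpha_t^i\ubeta_i = c\sqrt{GH^3A\iota}\,\nlsum_{i=1}^{t}\alpha_t^i/\sqrt{i}\ge c\sqrt{GH^3A\iota/t}$. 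Taking the bonus constant no smaller than the constant in Lemma~\ref{lem:v-est-err}, the two terms cancel, and since $\alpha_t^0H\ge 0$ the closed form yields $\uV_h^k(s)\ge \oV{h}(s)$, closing the induction.

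The step I expect to be most delicate is keeping the two optimizations consistent in direction: the minimax argument must extract a \emph{single} fixed $\mu_h^{\ast}$ \emph{before} averaging over $i$, so that the per-step best-response guarantee transfers to the convex combination, whereas Lemma~\ref{lem:v-est-err} is stated with $\max_\mu$ \emph{inside} the average. Reconciling these, and checking that the inductive monotonicity $\uV_{h+1}^{k^i}\ge\oV{h+1}$ survives both the application of $\Trans_h$ and the outer $\max_\mu$, is where the care lies; the remaining estimates are routine given Lemmas~\ref{lem:v-est-err} and~\ref{lem:ppt-alpha}.
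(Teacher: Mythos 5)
Your proposal is correct and takes essentially the same route as the paper's proof: backward induction on $h$, lower-bounding $\oV{h}(s)$ by playing the Nash max-policy against each historical $\nu_h^{k^i}$ (the paper phrases this as $\oV{h}(s)=\max_{\mu}\sum_{i}\alpha_t^i\min_{\nu}\D_{\mu,\nu}[\cdot]\le\max_{\mu}\sum_i\alpha_t^i\D_{\mu,\nu_h^{k^i}}[\cdot]$, which is the same step), then monotonicity under the induction hypothesis, and finally Lemma~\ref{lem:v-est-err} with the accumulated bonus $\sum_i\alpha_t^i\ubeta_i\ge c\sqrt{GH^3A\iota/t}$ (Property~\ref{ppt:1} of Lemma~\ref{lem:ppt-alpha}) absorbing the adversarial-bandit error. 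The only inaccuracy is immaterial: in Lemma~\ref{lem:v-est-err} the $\max_\mu$ sits \emph{outside} the average over $i$ (a single $\mu$ for all terms), which is exactly the form your chain needs, so the reconciliation you flag as delicate is automatic.
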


\begin{proof}
    The proof is similar to that of~\citep[Lemma 15]{bai2020near}, except that we need to deal with an extra parameter $G$ here.

    Let $k_h^{i}(s)$ denote the index of the episode where $s\in \cS_h$ is observed at step $h$ for the $i$th time.
    Where there is no ambiguity, we use $k^i$ as a shorthand for $k_h^{i}(s)$.
    Let $s_{h}^{k}$ be the state actually observed in the algorithm at step $h$ in episode $k$.
    For our choice of $\ubeta_{i}$, we have $\sum_{i=1}^{t} \alpha_{t}^{i} \ubeta_i = \Theta(G H^2 \sqrt{\nicefrac{A\iota}{t}})$ by Lemma~\ref{lem:ppt-alpha}.

    Recall that 
    \begin{align*}
        \uV_{h}^{k}(s) &:= \alpha_{t}^{0} H + \sum_{i=1}^{t} \alpha_{t}^{i} \left( r_{h}(s, a_{h}^{k^i}, b_{h}^{k^i}) + \uV_{h+1}^{k^i}(s_{h+1}^{k^i}) + \ubeta_{i} \right), \\
        \oV{h}(s) &:= \D_{\mu_{h}^{\ast}, \nu_{h}^{\ast}} [r_{h} + \Trans_{h} \oV{h+1}](s).
    \end{align*}

    For $h = H + 1$ the UCB vacuously holds. To apply backward induction, assume that $\oV{h+1} \le \uV_{h+1}^{k}$ holds entrywise.
    Then by definition, for any $s\in \cS_h$, 
    \begin{align*}
        \oV{h}(s) 
        &= \max_{\mu \in \Delta_{\cA_h}} \min_{\nu\in \Delta_{\cB_h}} \D_{\mu, \nu} [r_{h} + \Trans_{h} \oV{h+1}](s) \\
        &\overset{(i)}{=} \max_{\mu \in \Delta_{\cA_h}} \nlsum_{i=1}^{t} \alpha_{t}^{i} \min_{\nu\in \Delta_{\cB_h}} \D_{\mu, \nu} [r_{h} + \Trans_{h} \oV{h+1}](s) \\
        &\le \max_{\mu \in \Delta_{\cA_h}} \nlsum_{i=1}^{t} \alpha_{t}^{i} \D_{\mu, \nu_{h}^{k^i}} [r_{h} + \Trans_{h} \oV{h+1}](s) \\
        &\overset{(ii)}{\le} \max_{\mu \in \Delta_{\cA_h}} \nlsum_{i=1}^{t} \alpha_{t}^{i} \D_{\mu, \nu_{h}^{k^i}} [r_{h} + \Trans_{h} \uV_{h+1}^{k^i}](s) 
        \overset{(iii)}{\le} \uV_{h}^{k}(s), 
    \end{align*}
    where $(i)$ follows from $\sum_{i=1}^{t} \alpha_{t}^{i} = 1$, in $(ii)$ we apply the induction assumption, and $(iii)$ holds with probability at least $1-p$ by the V-learning lemma (Lemma \ref{lem:v-est-err}) and that $\sum_{i=1}^t \alpha_t^i \beta_t = \Theta(\sqrt{GH^3 A\iota/t})$
    because of our choice of $\beta_t$ and Property~\ref{ppt:1} of $\{\alpha_t^i\}$ in Lemma~\ref{lem:ppt-alpha}.
    Inductively we have $\oV{h}(s) \le \uV_{h}^{k}(s)$ for all $k\in [K]$, $h\in [H]$ and $s\in \cS_h$.
\end{proof}

\subsection{Proof of Theorem~\ref{thm:nash-v}}

\begin{proof}
    In the proof below, we use `$\lesssim$' to denote `$\le$' hiding some constants.
    Recall that 
    \begin{align*}
        V_{h}^{\mu^{k}, \nu^{k}}(s_{h}^{k}) 
        = \D_{\mu_{h}^{k}, \nu_{h}^{k}} [r_{h} + \Trans_{h} V_{h+1}^{\mu^{k}, \nu^{k}}](s_{h}^{k}).
    \end{align*}
    Then define $\delta_{h}^{k} := (\uV_{h}^{k} - V_{h}^{\mu^{k}, \nu^{k}})(s_{h}^{k})$. By definition, 
    \begin{align*}
        \delta_{h}^{k} 
        &= \alpha_{t}^{0} H + \sum_{i=1}^{t} \alpha_{t}^{i} \left( r_{h}(s_{h}^{k}, a_{h}^{k^i}, b_{h}^{k^i}) + \uV_{h+1}^{k^i}(s_{h+1}^{k^i}) + \ubeta_{i} \right) - \D_{\mu_{h}^{k}, \nu_{h}^{k}} [r_{h} + \Trans_{h} V_{h+1}^{\mu^{k}, \nu^{k}}](s_{h}^{k}) \\
        &\overset{(i)}{=} \alpha_{t}^{0} H + \sum_{i=1}^{t} \alpha_{t}^{i} \left( r_{h}(s_{h}^{k}, a_{h}^{k^i}, b_{h}^{k^i}) + \uV_{h+1}^{k^i}(s_{h+1}^{k^i}) + \ubeta_{i} \right) - \sum_{i=1}^{t} \alpha_{t}^{i} \D_{\mu^{k^i}, \nu^{k^i}} [r_h + \Trans_{h} \uV_{h+1}^{k^i}](s_{h}^{k}) \\
        &\qquad \qquad  + \sum_{i=1}^{t} \alpha_{t}^{i} \D_{\mu^{k^i}, \nu^{k^i}} [r_h + \Trans_{h} \uV_{h+1}^{k^i}](s_{h}^{k}) - \D_{\mu_{h}^{k}, \nu_{h}^{k}} [r_{h} + \Trans_{h} V_{h+1}^{\mu^{k}, \nu^{k}}](s_{h}^{k}) \\
        &\overset{(ii)}{\lesssim} \alpha_{t}^{0} H + \sqrt{\tfrac{GH^3 A\iota}{t}} 
        + \sum_{i=1}^{t} \alpha_{t}^{i} \D_{\mu^{k^i}, \nu^{k^i}} [r_h + \Trans_{h} \uV_{h+1}^{k^i}](s_{h}^{k}) - \D_{\mu_{h}^{k}, \nu_{h}^{k}} [r_{h} + \Trans_{h} V_{h+1}^{\mu^{k}, \nu^{k}}](s_{h}^{k}),
    \end{align*}
    where in $(i)$ we add and subtract the same term, and $(ii)$ follows from the property of $\beta_i$ that $\sum_{i=1}^{t} \alpha_{t}^{i} \ubeta_i = \Theta(\sqrt{\nicefrac{GH^3 A\iota}{t}})$ and the fact that by the Azuma-Hoeffding inequality and Property~\ref{ppt:2} of Lemma~\ref{lem:ppt-alpha}, 
    \begin{align*}
        \sum_{i=1}^{t} \alpha_{t}^{i} \left( r_{h}(s_{h}^{k}, a_{h}^{k^i}, b_{h}^{k^i}) + \uV_{h+1}^{k^i}(s_{h+1}^{k^i}) \right) - \sum_{i=1}^{t} \alpha_{t}^{i} \D_{\mu^{k^i}, \nu^{k^i}} [r_h + \Trans_{h} \uV_{h+1}^{k^i}](s_{h}^{k}) \lesssim \sqrt{\tfrac{G H^3\iota}{t}}.
    \end{align*}
    By the same regrouping technique as that in~\citep{jin2018q}, 
    \begin{align*}
        \sum_{k=1}^{K} \sum_{i=1}^{t} \alpha_{t}^{i} \D_{\mu^{k^i}, \nu^{k^i}} [r_h + \Trans_{h} \uV_{h+1}^{k^i}](s_{h}^{k}) 
        &\le \sum_{k'=1}^{K} \D_{\mu^{k'}, \nu^{k'}} [r_h + \Trans_{h} \uV_{h+1}^{k'}](s_{h}^{k}) \sum_{t=n_{h}^{k'}}^{\infty} \alpha_{t}^{n_{h}^{k'}} \\
        &\le (1 + \tfrac{1}{GH}) \sum_{k=1}^{K} \D_{\mu^{k}, \nu^{k}} [r_h + \Trans_{h} \uV_{h+1}^{k}](s_{h}^{k}).
    \end{align*}
    Substituting the above back into the bound on $\delta_{h}^{k}$ and taking sum over $k\in [K]$, we obtain 
    \begin{align*}
        \sum_{k=1}^{K} \delta_{h}^{k} 
        &\lesssim \sum_{k=1}^{K} \left( \alpha_{t}^{0} H + \sqrt{\tfrac{GH^3 A\iota}{t}} + (1 + \tfrac{1}{GH}) \D_{\mu^{k}, \nu^{k}} [r_h + \Trans_{h} \uV_{h+1}^{k}](s_{h}^{k}) - \D_{\mu_{h}^{k}, \nu_{h}^{k}} [r_{h} + \Trans_{h} V_{h+1}^{\mu^{k}, \nu^{k}}](s_{h}^{k}) \right) \\
        &\overset{(i)}{=} \sum_{k=1}^{K} \left( \alpha_{t}^{0} H + \sqrt{\tfrac{GH^3 A\iota}{t}} + (1 + \tfrac{1}{GH}) (\delta_{h+1}^{k} + \gamma_{h}^{k}) + \tfrac{1}{GH} \D_{\mu_{h}^{k}, \nu_{h}^{k}}[r_{h} + \Trans_{h} V_{h+1}^{\mu^{k}, \nu^{k}}](s_{h}^{k}) \right) \\
        &\overset{(ii)}{\le} \sum_{k=1}^{K} \left( \alpha_{t}^{0} H + \sqrt{\tfrac{GH^3 A\iota}{t}} + (1 + \tfrac{1}{GH}) (\delta_{h+1}^{k} + \gamma_{h}^{k}) + \tfrac{1}{G} \right),
    \end{align*}
    where in $(i)$ we define the martingale difference term $\gamma_{h}^{k} := \D_{\mu_{h}^{k}, \nu_{h}^{k}} [\Trans_{h} (\uV_{h+1}^{k} - V_{h+1}^{\mu^{k}, \nu^{k}})](s_{h}^{k}) - (\uV_{h+1}^{k} - V_{h+1}^{\mu^{k}, \nu^{k}})(s_{h+1}^{k})$ and $(ii)$ follows from that 
    \begin{align*}
        \D_{\mu_{h}^{k}, \nu_{h}^{k}}[r_{h} + \Trans_{h} V_{h+1}^{\mu^{k}, \nu^{k}}](s_{h}^{k}) \le H.
    \end{align*}
    Recursively, 
    \begin{align*}
        \sum_{k=1}^{K} \delta_{1}^{k} 
        &\lesssim (1 + \tfrac{1}{GH})^{H} \sum_{k=1}^{K} \sum_{h=1}^{H} \left( \alpha_{t}^{0} H + \sqrt{\tfrac{GH^3 A\iota}{t}} + (1 + \tfrac{1}{GH}) \gamma_{h}^{k} + \tfrac{1}{G} \right).
    \end{align*}
    Now we bound each term in $\sum_{k=1}^{K} \delta_{1}^{k}$ separately by standard techniques in~\citep{jin2018q, xie2020learning}:
    \begin{align*}
        & \sum_{k=1}^{K} \alpha_{n_{h}^{k}}^{0} H 
        \le \sum_{k=1}^{K} H \cdot \I(n_{h}^{k} = 0) 
        \le HS, \\
        & \sum_{k=1}^{K} \sqrt{\tfrac{GH^3 A\iota}{n_{h}^{k}}}
        = GH^2 \sqrt{A\iota} \sum_{k=1}^{K} \sqrt{\tfrac{1}{n_{h}^{k}}} 
        \le \sqrt{GH^3 A\iota} \sum_{s\in \cS_h} \sum_{n=1}^{n_h^K(s)} \sqrt{\tfrac{1}{n}}
        \lesssim \sqrt{GH^3 S A K\iota}), \\
        & \sum_{k=1}^{K} \sum_{h=1}^{H} \gamma_{h}^{k}
        \lesssim \sqrt{H^3 K\iota},
    \end{align*}
    where the second line follows from a pigeonhole argument and the third line follows from the Azuma-Hoeffding inequality.
    Combining the above bounds, we obtain 
    \begin{align*}
        \Regret(K) \le \sum_{k=1}^{K} \delta_{1}^{k} 
        \lesssim H^2 S + \sqrt{GH^5 SAK\iota} + G^{-1} KH. 
    \end{align*}

    If $K \ge H^3 SA$ then we take we take $G = \frac{1}{H}(\frac{K}{SA})^{\nicefrac{1}{3}}$; otherwise we take $G = K^{\tfrac{1}{3}}$.
    Then the following regret bounds holds:
    \begin{align*}
        \Regret(K) = 
        \begin{cases}
            \bigotilde\bigl( H^2 S^{\frac{1}{3}} A^{\frac{1}{3}} K^{\frac{2}{3}} + H^2 S \bigr), \text{ if } K \ge H^3SA, \\
            \bigotilde\bigl( \sqrt{H^5 SA} K^{\frac{2}{3}} + H^2 S \bigr), \text{ otherwise.} \\
        \end{cases}
    \end{align*}
\end{proof}

\section{The \nashqol{} Algorithm}
\label{sec:nashq}

When explaining the intuition behind the \nashvol{}
in Section~\ref{sec:vlearning}, we mentioned that learning a Q-table will result in a regret bound depending on $AB$. This is clear for the other algorithms we mentioned in the literature. However, the regret bounds of Q-learning-type algorithms have not been studied to our best knowledge. In this section, we study a Q-learning-type algorithm for online MGs. We formalize \nashvol{} in Algorithm~\ref{alg:nash-q}, which is similar to the Optimistic Nash Q-learning (\nashqsp{}) algorithm in~\citep{bai2020near}. We emphasize that since learning a Q-table requires knowing the opponents' actions, \nashqol{} only works for informed MGs, but not for unknown MGs. 

\begin{algorithm}[htbp]
    \caption{Optimistic Nash Q-learning for Online Learning (\nashqol{})}
    \label{alg:nash-q}
    \begin{algorithmic}[1]
        \State {\bfseries Require:} Learning rate $\{\alpha_t\}_{t\ge 1}$, exploration bonus $\{\beta_t\}_{t\ge 1}$
        \State {\bfseries Initialize:} for any $(s, a, b, h)$,
        $\uQ_{h}(s, a, b)\setto H$, $N_{h}(s, a, b)\setto 0$, $\mu_h(a\vert s) \leftarrow 1/A$
        \For{episode $k=1,\dots,K$}
            \State Receive $s_1$
            \For{step $h=1,\dots, H$}
                \State Take action $a_h \sim \mu_h(\cdot| s_h)$
                \State Observe action $b_h$, reward $r_h(s_h, a_h, b_h)$ and next state
                $s_{h+1}$
                \State $t = N_{h}(s_h, a_h, b_h)\setto N_{h}(s_h, a_h, b_h) + 1$
                \State $\uQ_h(s_h, a_h, b_h) \setto (1-\alpha_t)\uQ_h(s_h, a_h, b_h)+ \alpha_t(r_h(s_h, a_h, b_h)+\uV_{h+1}(s_{h+1})+\beta_t)$ \label{line:uQ_update}
                \State Solve the NE $(\mu_{h}(\cdot, \vert s_h), \nu_{h}(\cdot, \vert s_{h}))$ of the matrix game with payoff matrix $Q_{h}^{k}(s_{h}, \cdot, \cdot)$
                \State $\uV_h(s_h) \leftarrow (\D_{\mu_{h}\times \nu_{h} }\uQ_h)(s_h)$
            \EndFor
        \EndFor
    \end{algorithmic}
\end{algorithm}

In Algorithm~\ref{alg:nash-q}, we set $\alpha_t := \nicefrac{H + 1}{H + t}$. 
As in the analysis of \nashvol{}, below we use a superscript $k$ to signify the corresponding quantities at the beginning of the $k$th episode. 
The following lemma claims that $\uQ_h^k$ and $\uV_h^k$ are the entrywise upper confidence bounds of $\oQ{h}$ and $\oV{h}$ for all $k \in [K]$ and $h\in [H]$; see the proof of~\citep[Lemma 3]{bai2020near} for its proof.
\begin{lemma}[Upper confidence bounds]  \label{lem:q-ucb}
    In Algorithm~\ref{alg:nash-q}, for any $p\in (0, 1)$, $\iota = \log (\nicefrac{HSAK}{p})$ and choose $\beta_t = c \sqrt{H^3 \iota / t}$ for some large constant $c$. Then with probability at least $1-p$, $\oQ{h}(s, a, b) \le \uQ_{h}^{k}(s, a, b)$ and $\oV{h}(s) \le \uV_{h}^{k}(s)$ for all $k\in [K]$, $h\in [H]$ and $(s, a, b) \in \cS \times \cA_h \times \cB_h$.
\end{lemma}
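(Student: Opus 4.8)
The plan is to prove both bounds simultaneously by backward induction on $h$, from $h=H+1$ (where $\uV_{H+1}^{k}=\oV{H+1}=0$ trivially) down to $h=1$, using the accumulated-weight representation of the $Q$-update. First I would unroll line~\ref{line:uQ_update}: if $(s,a,b)$ has been visited $t=N_h^k(s,a,b)$ times before episode $k$, at episodes $k^1<\cdots<k^t$, then
\begin{align*}
    \uQ_h^k(s,a,b) = \alpha_t^0 H + \nlsum_{i=1}^t \alpha_t^i\bigl(r_h(s,a,b) + \uV_{h+1}^{k^i}(s_{h+1}^{k^i}) + \beta_i\bigr),
\end{align*}
where $\alpha_t^0=\nlprod_{j=1}^t(1-\alpha_j)=0$ for $t\ge1$ because $\alpha_1=\tfrac{H+1}{H+1}=1$, so $\nlsum_{i=1}^t \alpha_t^i=1$. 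Since $\oQ{h}(s,a,b)=r_h(s,a,b)+\Trans_h\oV{h+1}(s,a,b)$ is constant in $i$, subtracting and using this normalization gives
\begin{align*}
    \uQ_h^k(s,a,b)-\oQ{h}(s,a,b)
    &= \nlsum_{i=1}^t \alpha_t^i\bigl(\uV_{h+1}^{k^i}-\oV{h+1}\bigr)(s_{h+1}^{k^i}) \\
    &\quad + \nlsum_{i=1}^t \alpha_t^i\bigl(\oV{h+1}(s_{h+1}^{k^i})-\Trans_h\oV{h+1}(s,a,b)\bigr) + \nlsum_{i=1}^t \alpha_t^i\beta_i.
\end{align*}

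For the $Q$-step I would control the three terms separately. The first is nonnegative by the induction hypothesis $\uV_{h+1}^{k^i}\ge\oV{h+1}$. The second is a weighted martingale-difference sum: because $\oV{h+1}$ is the \emph{fixed} minimax value (not the random iterate), conditioning on the history up to the action taken at step $h$ of episode $k^i$ makes $\E[\oV{h+1}(s_{h+1}^{k^i})]=\Trans_h\oV{h+1}(s,a,b)$, so an Azuma--Hoeffding bound with weights $\alpha_t^i$ and range $H$ controls it by $c'\sqrt{H^3\iota/t}$ for an absolute constant $c'$, using Property~\ref{ppt:2} of Lemma~\ref{lem:ppt-alpha} with $G=1$. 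The third term satisfies $\nlsum_{i=1}^t\alpha_t^i\beta_i=c\sqrt{H^3\iota}\,\nlsum_{i=1}^t \alpha_t^i/\sqrt{i}\ge c\sqrt{H^3\iota/t}$ by Property~\ref{ppt:1}. Choosing $c$ large enough that the bonus dominates the martingale fluctuation then forces $\uQ_h^k(s,a,b)\ge\oQ{h}(s,a,b)$.

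For the $V$-step I would pass from the entrywise $Q$-bound to the $V$-bound via the monotonicity of the matrix-game value. Here $\uV_h^k(s)=\D_{\mu_h^k,\nu_h^k}\uQ_h^k(s)$, where $(\mu_h^k,\nu_h^k)$ is a Nash equilibrium of the payoff matrix $\uQ_h^k(s,\cdot,\cdot)$, so $\uV_h^k(s)$ equals the game value $\max_{\mu}\min_{\nu}\D_{\mu,\nu}\uQ_h^k(s)$; likewise $\oV{h}(s)=\max_{\mu}\min_{\nu}\D_{\mu,\nu}\oQ{h}(s)$. Since $\uQ_h^k\ge\oQ{h}$ entrywise and the minimax value is monotone in the payoff matrix, $\uV_h^k(s)\ge\oV{h}(s)$, which closes the induction at level $h$.

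The hard part will be the concentration bookkeeping rather than any conceptual difficulty: I must take a union bound over all $(s,a,b,h)$ and all visit counts $t\in[K]$ so that the martingale estimate holds simultaneously, which is exactly what fixes $\iota=\log(\nicefrac{HSAK}{p})$ and forces $c$ to absorb both the union-bound logarithm and the constant $c'$ from Azuma--Hoeffding. The only place the two-player structure enters is the $V$-step, where I rely on the solved matrix game returning precisely the minimax value; all remaining estimates mirror the single-agent argument and the self-play proof of~\citep[Lemma 3]{bai2020near}, to which I would defer for the routine details.
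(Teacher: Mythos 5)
Your proposal is correct and takes essentially the same route as the paper, which for this lemma simply defers to the proof of \citep[Lemma 3]{bai2020near}: backward induction on $h$, the unrolled weighted update (with $\alpha_t^0=0$ for $t\ge 1$ since $\alpha_1=1$), an Azuma--Hoeffding bound on the martingale term built from the \emph{fixed} function $\oV{h+1}$, domination of that fluctuation by the bonus via Lemma~\ref{lem:ppt-alpha}, and monotonicity of the matrix-game value to pass from the entrywise $Q$-bound to the $V$-bound. The only blemish is one the paper's statement itself shares: a union bound over all $(s,a,b,h,t)$ strictly needs $\iota$ to carry a $\log B$ term, which $\iota=\log(\nicefrac{HSAK}{p})$ omits.
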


Then for \nashqol{}, we have the following regret guarantees.

\begin{theorem}[Regret bound of \nashqol{}]  \label{thm:nash-q}
    For any $p \in (0, 1)$, let $\iota = \log (\nicefrac{HSAK}{p})$ and choose $\beta_t = c \sqrt{H^3 \iota / t}$ for some large constant $c$. If we run \nashqol{} in a two-player zero-sum MG, then with probability at least $1 - p$, the regret in $K$ episodes satisfies
    \begin{align} \label{eqn:nash-q-regret}
        \Regret(K) 
        = \bigo\left( SABH^2 + \sqrt{H^5 SABK \iota} \right).
    \end{align}
\end{theorem}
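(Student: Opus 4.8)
The plan is to follow the regret analysis of \nashvol{} together with the single-agent Q-learning argument of \citet{jin2018q}, inserting the Nash-equilibrium structure at the points where the two players interact. The starting point is optimism: Lemma~\ref{lem:q-ucb} gives $\oV{1}(s_1^k)\le\uV_1^k(s_1^k)$, so setting $\delta_h^k:=(\uV_h^k-V_h^{\mu^k,\nu^k})(s_h^k)$ and $\phi_h^k:=(\uV_h^k-\oV{h})(s_h^k)\ge0$ we have
\[
  \Regret(K)=\sum_{k=1}^K\bigl(\oV{1}-V_1^{\mu^k,\nu^k}\bigr)(s_1^k)\le\sum_{k=1}^K\delta_1^k,
\]
and it remains to control $\sum_k\delta_1^k$ through a step-wise recursion.

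I would first convert $\delta_h^k$ into a $Q$-gap using the equilibrium property of the played policy. Because $\mu_h^k$ is the max-side NE strategy of the matrix game $\uQ_h^k(s_h^k,\cdot,\cdot)$, it secures value at least $\uV_h^k(s_h^k)$ against every min-strategy, in particular the opponent's actual $\nu_h^k$; hence $\uV_h^k(s_h^k)\le\D_{\mu_h^k,\nu_h^k}\uQ_h^k(s_h^k)$ and $\delta_h^k\le\D_{\mu_h^k,\nu_h^k}[\uQ_h^k-Q_h^{\mu^k,\nu^k}](s_h^k)$. Reading off the played pair $(a_h^k,b_h^k)\sim\mu_h^k\times\nu_h^k$ turns this into the gap at the visited triple plus a martingale term; I would then unfold $\uQ_h^k(s_h^k,a_h^k,b_h^k)=\alpha_t^0H+\sum_i\alpha_t^i(r_h+\uV_{h+1}^{k^i}(s_{h+1}^{k^i})+\beta_i)$, with $t=N_h^k(s_h^k,a_h^k,b_h^k)$ and $k^1,\dots,k^t$ the earlier visits to that triple. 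The essential move is to pivot the past-visit values $\uV_{h+1}^{k^i}(s_{h+1}^{k^i})$ through the \emph{fixed} minimax value $\oV{h+1}$ rather than through the episode-dependent $V_{h+1}^{\mu^k,\nu^k}$, since only a fixed test function makes the deviation $\oV{h+1}(s_{h+1}^{k^i})-\Trans_h\oV{h+1}$ a genuine martingale difference. This yields
\[
  \delta_h^k\lesssim\alpha_t^0H+\sum_i\alpha_t^i\phi_{h+1}^{k^i}+\delta_{h+1}^k-\phi_{h+1}^k+\sum_i\alpha_t^i\beta_i+(\text{martingale}),
\]
where retaining the current-episode term $-\phi_{h+1}^k$ is crucial.

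Summing over $k$ and applying the regrouping identity $\sum_k\sum_i\alpha_t^if^{k^i}\le(1+\tfrac1H)\sum_kf^k$ (Property~\ref{ppt:3} of Lemma~\ref{lem:ppt-alpha}, with $G=1$), the two optimism terms collapse to $(1+\tfrac1H)\sum_k\phi_{h+1}^k-\sum_k\phi_{h+1}^k=\tfrac1H\sum_k\phi_{h+1}^k$, so that $\sum_k\delta_h^k\lesssim\sum_k\alpha_{n_h^k}^0H+\tfrac1H\sum_k\phi_{h+1}^k+\sum_k\delta_{h+1}^k+\sqrt{H^3SABK\iota}+(\text{martingale})$, where $\sqrt{H^3SABK\iota}$ is the bonus total at level $h$. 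Telescoping $\delta_{h+1}^k$ over $h\in[H]$ (using $\delta_{H+1}^k=0$) reduces everything to bounding $\sum_{h'}\sum_k\phi_{h'}^k$ and the per-level bonus. The gaps $\phi$ obey their own recursion—the same expansion against the fixed $\oV{}$ gives $\phi_h^k\lesssim\alpha_t^0H+\sum_i\alpha_t^i\phi_{h+1}^{k^i}+\sum_i\alpha_t^i\beta_i+(\text{martingale})$, which telescopes cleanly to $\sum_k\phi_{h'}^k\lesssim\sqrt{H^5SABK\iota}+H^2SAB$ per level. Since these appear only with the $\tfrac1H$ prefactor, summing over the $H$ levels cancels exactly one power of $H$ and returns $\sqrt{H^5SABK\iota}$; the first-visit terms give $H^2SAB$ and the Azuma martingales are of lower order. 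The dependence on $B$ enters solely at the pigeonhole step: the counts $N_h(s,a,b)$ now distribute the $K$ visits over $SAB$ triples, turning $\sum_k\sqrt{H^3\iota/n_h^k}$ into $\sqrt{H^3SABK\iota}$.

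I expect the principal obstacle to be the $\phi$-recursion in the game. In single-agent Q-learning the greedy action is the played action, so $\phi_h^k$ is read off directly at a played $(s,a)$ pair and its recursion is immediate; here $\phi_h^k$ is a difference of \emph{matrix-game values}, and the strategies certifying it (the NE strategies of $\oQ{h}$, or a best response to $\mu_h^k$) need not equal the played mixed strategies $\mu_h^k\times\nu_h^k$. Turning the resulting per-triple $\uQ_h^k-\oQ{h}$ gaps into a telescoping form with matching visitation counts and valid fixed-test-function martingales is where the entrywise-UCB bookkeeping of \citet{bai2020near} is needed, and it is also the step where one must check that keeping the $-\phi_{h+1}^k$ term holds the final rate at $H^{5/2}$ rather than the naive $H^{7/2}$.
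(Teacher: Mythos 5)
Your proposal matches the paper's proof essentially step for step: the same optimism-based reduction to $\sum_k\delta_1^k$, the same pivot of $\uQ_h^k-Q_h^{\mu^k,\nu^k}$ through the fixed $\oQ{h}$ and $\oV{h+1}$ so the Azuma terms are genuine martingale differences, the same $(1+\nicefrac{1}{H})$ regrouping that collapses $\sum_i\alpha_t^i\phi_{h+1}^{k^i}-\phi_{h+1}^k$ to $\tfrac{1}{H}\sum_k\phi_{h+1}^k$, and the same pigeonhole over $SAB$ triples producing the $B$-dependence. The obstacle you flag at the end---that $\phi_{h+1}^k\le\delta_{h+1}^k$ fails because the strategies certifying the matrix-game values need not be the played ones---is exactly the point the paper singles out, and your resolution (a separate recursion for $\phi$ against the fixed minimax value, entering only with the $\tfrac{1}{H}$ prefactor) is precisely the paper's argument.
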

 
\begin{proof}
    Let $k_h^{i}(s, a, b)$ denote the index of the episode where $(s, a, b)$ is observed at step $h$ for the $i$th time.
    Where there is no ambiguity, we use $k^i$ as a shorthand for $k_h^{i}(s, a, b)$.
    Let $s_{h}^{k}$ be the state actually observed in the algorithm at step $h$ in episode $k$.

    By defining 
    \begin{align*}
        \ugamma_{h}^{k} &:= \E_{a\sim \mu_{h}^{k}(s_{h}^{k})} [\uQ_{h}^{k}(s_{h}^{k}, a, b_{h}^{k})] - \uQ_{h}^{k}(s_{h}^{k}, a_{h}^{k}, b_{h}^{k}), \\
        \hgamma_{h}^{k} &:= \E_{a\sim \mu_{h}^{k}(s_{h}^{k}), b \sim \omega_{h}^{k}} [Q_{h}^{\mu^{k}, \omega^{k}}(s_{h}^{k}, a, b)] - Q_{h}^{\mu^{k}, \omega^{k}}(s_{h}^{k}, a_{h}^{k}, b_{h}^{k}),
    \end{align*}
    we have 
    \begin{align*}
        & \uV_{h}^{k}(s_{h}^{k}) 
        = \min_{\nu \in \Delta_{\cB}} \E_{a\sim \mu_{h}^{k}(s_{h}^{k}), b \sim \nu} [\uQ_{h}^{k}(s_{h}^{k}, a, b)] 
        \le \E_{a\sim \mu_{h}^{k}(s_{h}^{k})} [\uQ_{h}^{k}(s_{h}^{k}, a, b_{h}^{k})] 
        = \uQ_{h}^{k}(s_{h}^{k}, a_{h}^{k}, b_{h}^{k}) + \ugamma_{h}^{k}, \\
        & V_{h}^{\mu^{k}, \omega^{k}}(s_{h}^{k}) 
        = \E_{a\sim \mu_{h}^{k}(s_{h}^{k}), b \sim \omega_{h}^{k}} [Q_{h}^{\mu^{k}, \omega^{k}}(s_{h}^{k}, a, b)]
        = Q_{h}^{\mu^{k}, \omega^{k}}(s_{h}^{k}, a_{h}^{k}, b_{h}^{k}) + \hgamma_{h}^{k}.
    \end{align*}

    Define $\delta_{h}^{k} := \uV_{h}^{k}(s_{h}^{k}) - V_{h}^{\mu^{k}, \omega^{k}}(s_{h}^{k})$ and $\phi_{h}^{k} := \uV_{h}^{k}(s_{h}^{k}) - \oV{h}(s_{h}^{k})$.
    Then 
    \begin{align*}
        \delta_{h}^{k} 
        \le \uQ_{h}^{k}(s_{h}^{k}, a_{h}^{k}, b_{h}^{k}) + \ugamma_{h}^{k} - Q_{h}^{\mu^{k}, \omega^{k}}(s_{h}^{k}, a_{h}^{k}, b_{h}^{k}) - \hgamma_{h}^{k}.
    \end{align*}

    In Algorithm~\ref{alg:nash-q}, for any $k\in [K]$, $h\in [H]$ and $(s, a, b) \in \cS_h \times \cA_h \times \cB_h$, let $t := N_{h}^{k}(s, a, b)$ and suppose $(s, a, b)$ is previously visited at episodes $k^1, \cdots, k^t \le k$. Then we can rewrite $\uQ_h^k(s, a, b)$ as 
    \begin{align*}
        \uQ_{h}^{k}(s, a, b) = \alpha_{t}^{0} H + \sum_{i=1}^{t} \alpha_{t}^{i} \left( r_{h}(s, a, b) + \uV_{h+1}^{k^i}(s_{h+1}^{k^i}) + \beta_{i} \right), 
    \end{align*}
    and recall that 
    \begin{align*}
        \oQ{h}(s, a, b) = r_{h}(s, a, b) + \Trans_{h} \oV{h+1}(s, a, b).
    \end{align*}
    Then the difference between $\uQ_{h}^{k}$ and $Q_{h}^{\mu^{k}, \omega^{k}}$ at $(s_{h}^{k}, a_{h}^{k}, b_{h}^{k})$ satisfies 
    \begin{align*}
        (\uQ_{h}^{k} - Q_{h}^{\mu^{k}, \omega^{k}}) (s_{h}^{k}, a_{h}^{k}, b_{h}^{k}) 
        & \overset{(i)}{=} (\uQ_{h}^{k} - \oQ{h} + \oQ{h} - Q_{h}^{\mu^{k}, \omega^{k}}) (s_{h}^{k}, a_{h}^{k}, b_{h}^{k}) \\
        & \overset{(ii)}{\le} \alpha_{t}^{0} H + \sum_{i=1}^{t} \alpha_{t}^{i} \phi_{h+1}^{k^i} + 2\tbeta_{t} + \Trans_{h} (\oV{h+1} - V_{h+1}^{\mu^{k}, \omega^{k}})(s_{h}^{k}, a_{h}^{k}, b_{h}^{k}) \\
        & \overset{(iii)}{=} \alpha_{t}^{0} H + \sum_{i=1}^{t} \alpha_{t}^{i} \phi_{h+1}^{k^i} + 2\tbeta_{t} + \delta_{h+1}^{k} - \phi_{h+1}^{k} + \zeta_{h}^{k}, 
    \end{align*}
    where in $(i)$ we add and subtract the same term, in $(ii)$ we define $\tbeta_t := \sum_{i=1}^{t} \alpha_{t}^{i} \beta_{i} = \bigo(\sqrt{H^3 \iota / t})$ and by the Azuma-Hoeffding inequality we have 
    \begin{align*}
        \left| \sum_{i=1}^{t} \alpha_{t}^{i} \bigl(\Trans_{h} \oV{h+1}(s, a, b) - \uV_{h+1}^{k^i}(s_{h+1}^{k^i}) \bigr) \right| \le 2H\sqrt{ \iota \sum_{i=1}^{t} (\alpha_{t}^{i})^2 } = \bigo\left(\sqrt{\frac{H^3 \iota}{t}}\right) \overset{\text{choice of $\beta_i$}}{=\joinrel=\joinrel=\joinrel=} \tbeta_{t},
    \end{align*}
    and in $(iii)$ we define 
    \begin{align*}
        \zeta_{h}^{k} := \Trans_{h} (\oV{h+1} - V_{h+1}^{\mu^{k}, \omega^{k}})(s_{h}^{k}, a_{h}^{k}, b_{h}^{k}) - (\oV{h+1} - V_{h+1}^{\mu^{k}, \omega^{k}})(s_{h+1}^{k}).
    \end{align*}
    Therefore, 
    \begin{align*}
        \delta_{h}^{k} \le \delta_{h+1}^{k} + \alpha_{t}^{0} H + \sum_{i=1}^{t} \alpha_{t}^{i} \phi_{h+1}^{k} + 2\tbeta_{t} - \phi_{h+1}^{k} + \zeta_{h}^{k} + \ugamma_{h}^{k} - \hgamma_{h}^{k}.
    \end{align*}
    Recursively, 
    \begin{align}  \label{eqn:delta1k}
        \delta_{1}^{k} \le \sum_{h=1}^{H} \left( \alpha_{t}^{0} H + \sum_{i=1}^{t} \alpha_{t}^{i} \phi_{h+1}^{k} + 2\tbeta_{t} - \phi_{h+1}^{k} + \zeta_{h}^{k} + \ugamma_{h}^{k} - \hgamma_{h}^{k} \right).
    \end{align}

    By Lemma~\ref{lem:q-ucb}, the regret that we aim to bound is upper bounded by $\sum_{k=1}^{K} \delta_{1}^{k}$.
    Let $n_{h}^{k} := N_{k}(s_{h}^{k}, a_{h}^{k}, b_{h}^{k})$.
    By the regrouping technique in~\citep{jin2018q}, 
    \begin{align*}
        \sum_{k=1}^{K} \sum_{i=1}^{t} \alpha_{t}^{i} \phi_{h+1}^{k^i} 
        & \le \sum_{k'=1}^{K} \phi_{h+1}^{k'} \sum_{t=n_{h}^{k'}}^{\infty} \alpha_{t}^{n_{h}^{k'}} \le (1 + \frac{1}{H}) \sum_{k=1}^{K} \phi_{h+1}^{k}.
    \end{align*}
    Substituting the above into~\eqref{eqn:delta1k} yields 
    \begin{align*}
        \sum_{k=1}^{K} \delta_{1}^{k} \le \sum_{k=1}^{K} \sum_{h=1}^{H} \left(\alpha_{n_{h}^{k}}^{0} H + \frac{1}{H} \phi_{h+1}^{k} + 2\tbeta_{t} + \zeta_{h}^{k} + \ugamma_{h}^{k} - \hgamma_{h}^{k} \right).
    \end{align*}
    Now we bound each term in $\sum_{k=1}^{K} \delta_{1}^{k}$ separately by standard techniques in~\citep{jin2018q, xie2020learning}: 
    \begin{equation}  \label{eqn:sep-bdds}
        \begin{aligned}
            & \sum_{k=1}^{K} \alpha_{n_{h}^{k}}^{0} H 
            \le \sum_{k=1}^{K} H \cdot \I(n_{h}^{k} = 0) 
            \le SABH, \\
            & \sum_{k=1}^{K} \tbeta_{n_{h}^{k}} 
            \le \bigo(1) \sum_{k=1}^{K} \sqrt{\frac{H^3\iota}{n_{h}^{k}}} 
            \le \bigo(\sqrt{H^3SABK\iota}), \\
            & \sum_{k=1}^{K} \sum_{h=1}^{H} (\zeta_{h}^{k} + \ugamma_{h}^{k} - \hgamma_{h}^{k}) 
            = \bigo(\sqrt{H^3 K\iota}) = \bigo(\sqrt{H^3 K \iota}).
        \end{aligned}
    \end{equation}
    Bounding $\frac{1}{H} \sum_{k=1}^{K} \sum_{h=1}^{H} \phi_{h+1}^{k}$ requires additional efforts, since here the relationship $\phi_{h+1}^{k} \le \delta_{h+1}^{k}$ in~\citep{jin2018q} does not necessarily hold.
    Define the martingale difference sequence 
    \begin{align*}
        \gamma_{h}^{k} = \E_{a\sim \mu_{h}^{*}(s_{h}^{k}), b \sim \nu_{h}^{*}} [Q_{h}^{*}(s_{h}^{k}, a, b)] - Q_{h}^{*}(s_{h}^{k}, a_{h}^{k}, b_{h}^{k}).
    \end{align*}
    Then by noting 
    \begin{align*}
        \phi_{h}^{k} 
        = \uQ_{h}^{k}(s_{h}^{k}, a_{h}^{k}, b_{h}^{k}) + \ugamma_{h}^{k} - \oQ{h}(s_{h}^{k}, a_{h}^{k}, b_{h}^{k}) - \gamma_{h}^{k} 
        \le \alpha_{t}^{0} H + \sum_{i=1}^{t} \alpha_{t}^{i} \phi_{h+1}^{k^i} + 2\tbeta_{t} + \ugamma_{h}^{k} - \gamma_{h}^{k},
    \end{align*}
    we obtain 
    \begin{align*}
        \sum_{k=1}^{K} \phi_{h}^{k} 
        \le (1 + \frac{1}{H}) \sum_{k=1}^{K} \phi_{h+1}^{k} + \sum_{k=1}^{K} \left( \alpha_{t}^{0} H + 2\tbeta_{t} + \ugamma_{h}^{k} - \gamma_{h}^{k} \right).
    \end{align*}
    Recursively, for all $h'\in [H]$, 
    \begin{align*}
        \sum_{k=1}^{K} \phi_{h'}^{k} 
        \le (1 + \frac{1}{H})^{H + 1 - h'} \sum_{k=1}^{K} \sum_{h=h'}^{H} \left( \alpha_{t}^{0} H + 2\tbeta_{t} + \ugamma_{h}^{k} - \gamma_{h}^{k} \right).
    \end{align*}
    Then by similar arguments to those in~\eqref{eqn:sep-bdds}, 
    \begin{align}  \label{eqn:sep-bdds-phi}
        \frac{1}{H} \sum_{k=1}^{K} \sum_{h=1}^{H} \phi_{h}^{k} \lesssim SABH^2 + \sqrt{H^5 SABK \iota}.
    \end{align}
    Finally, combining the above separate bounds in~\eqref{eqn:sep-bdds} and~\eqref{eqn:sep-bdds-phi} yields 
    \begin{align*}
        \Regret(K) \lesssim SABH^2 + \sqrt{H^5 SABK \iota}.
    \end{align*}
    
\end{proof}

\end{document}